%% file: tkde.tex
\documentclass[lettersize,journal]{IEEEtran}
\usepackage{amsmath,amsfonts}
\usepackage{amsthm} 
\usepackage{algorithm}
\usepackage{algpseudocode} 
\usepackage{etoolbox}

\usepackage{array}
\usepackage[caption=false,font=normalsize,labelfont=sf,textfont=sf]{subfig}
\usepackage{textcomp}
\usepackage{stfloats}
\usepackage{url}
\usepackage{verbatim}
\usepackage{graphicx}
\usepackage{cite}
\usepackage{booktabs}
\usepackage{bm}
\usepackage{tabularx}
\usepackage{multirow}
\usepackage{xcolor}
\usepackage{enumitem}
\newtheoremstyle{propositionstyle}{3pt}{3pt}{\itshape}{}{\bfseries}{.}{ }{}
\theoremstyle{propositionstyle}
\newtheorem{proposition}{Proposition}
\begin{document}
\newcommand{\blue}{\textcolor{black}}
\newcommand{\bluetable}{\color{black}}
\title{Sharpness-Aware Poisoning: Enhancing Transferability of Injective Attacks on Recommender Systems}

\author{Junsong Xie, Yonghui Yang, Pengyang Shao, Le Wu,~\IEEEmembership{Member,~IEEE}
\IEEEcompsocitemizethanks{
\IEEEcompsocthanksitem This work was supported in part by grants from the National Natural Science Foundation of China under Grant U23B2031 and Grant 62436003, and in part by grants from the Fundamental Research Funds for the Central Universities under Grant JZ2025HGPB0248 and Grant PA2025IISL0106. \textit{(Corresponding author: Le Wu.)}
    \IEEEcompsocthanksitem  Junsong Xie, Pengyang Shao, Le Wu are with the Key Laboratory of Knowledge Engineering with Big Data, Hefei University of Technology, Hefei 230009, China. Junsong Xie is also with the Intelligent Interconnected Systems Laboratory of Anhui Province, Hefei University of Technology, Hefei 230009, China (e-mail: jsxie.hfut@gmail.com; shaopymark@gmail.com; lewu.ustc@gmail.com).
    \IEEEcompsocthanksitem Yonghui Yang is with the Laboratory of NExT++, National University of Singapore, Singapore, and with the Intelligent Interconnected Systems Laboratory of Anhui Province, Hefei University of Technology, Hefei 230009, China (e-mail: yh\_yang@nus.edu.sg).

}}

\markboth{Journal of \LaTeX\ Class Files,~Vol.~14, No.~8, August~2021}%
{Shell \MakeLowercase{\textit{et al.}}: A Sample Article Using IEEEtran.cls for IEEE Journals}

\maketitle
\newcommand{\name}{\textit{SharpAP}}

\input{content/0-abs}

\input{content/1-intro}

\input{content/3-pre}
\input{content/4-method}
\input{content/5-experiment}
\input{content/2-related}

\input{content/6-conclusion}


\begin{small}
\bibliographystyle{abbrv}
\bibliography{ref}
\end{small}

\input{author}

\end{document}

%% file: content/0-abs.tex
\begin{abstract}
Recommender Systems~(RS) have been shown to be vulnerable to injective attacks, where attackers inject limited fake user profiles to promote the exposure of target items to real users for unethical gains (e.g., economic or political advantages).
Since attackers typically lack knowledge of the victim model deployed in the target RS, existing methods resort to using a fixed surrogate model to mimic the potential victim model.
Despite considerable progress, we argue that the assumption that \textit{poisoned data generated for the surrogate model can be used to attack other victim models} is wishful.
When there are significant structural discrepancies between the surrogate and victim models, the attack transferability inevitably suffers.
Intuitively, if we can identify the worst-case victim model and iteratively optimize the poisoning effect specifically against it, then the generated poisoned data would be better transferred to other victim models.
However, exactly identifying the worst-case victim model during the attack process is challenging due to the large space of victim models.
To this end, in this work, we propose a novel attack method called Sharpness-Aware Poisoning (\textit{SharpAP}).
Specifically, it employs the sharpness-aware minimization principle to seek the approximately worst-case victim model and optimizes the poisoned data specifically for this worst-case model.  
The poisoning attack with \name~is formulated as a min-max-min tri-level optimization problem.
By integrating \name~into the iterative process for attacks, our method can generate more robust poisoned data which is less sensitive to the shift of model structure, mitigating the overfitting to the surrogate model.
Comprehensive experimental comparisons on three real-world datasets demonstrate that \name~can significantly enhance the attack transferability.

\begin{IEEEkeywords}
Recommender Systems, Poisoning Attacks, Sharpness-Aware Optimization, Adversarial Transferability
\end{IEEEkeywords}

\end{abstract}

%% file: content/1-intro.tex
\maketitle

\IEEEpeerreviewmaketitle

\ifCLASSOPTIONcompsoc
\IEEEraisesectionheading{\section{Introduction}\label{sec:intro}}
\else

\vspace{-0.2cm}

\section{Introduction} \label{sec:intro}
\fi
\IEEEPARstart{I}{n} the digital age, Recommender Systems (RS) have emerged as indispensable tools for mitigating information overload across diverse platforms~\cite{wule2022survey,wu2018collaborative,zhang2019deep,yang2023generative}, from e-commerce (e.g., Amazon, Taobao) to social media (e.g., Twitter, Xiaohongshu).
By leveraging user behaviors such as purchase history and content consumption patterns, RS infers users' latent preferences and then recommends potentially interesting items. 
However, their growing significance has also drawn attention from adversarial entities seeking to exploit their vulnerabilities~\cite{wang2024poisoning,revAdv,CLeaR}.
RS typically utilizes Collaborative Filtering (CF) to provide recommendations, meaning that users’ recommendations are influenced not only by their own historical behaviors but also by interactions from other users. 
While this principle enhances recommendation accuracy by leveraging collective user preferences, it also introduces security risks~\cite{uba_www,wang2025id,wzw02}. 
Specifically, the openness of RS and collaborative patterns create opportunities for \textit{injective attacks}~\cite{dada_nips,zhang2024improving,wang2025graph}.
\begin{figure}[t]
  \begin{center}
    \vspace{-0.50cm}
    \includegraphics[scale=0.73]{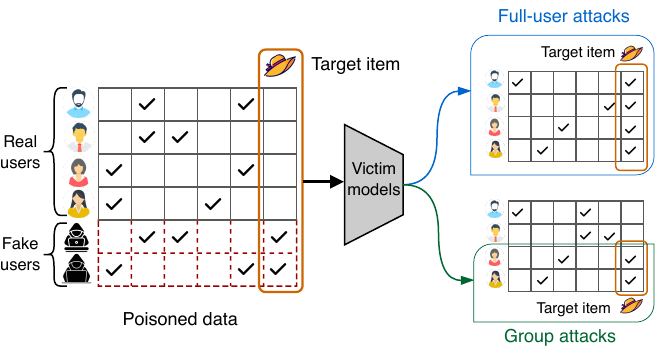}
  \end{center}
  \vspace{-0.40cm}
  \caption{Illustration of injective attacks on full-users and group users. Full-user attacks aim to increase the exposure probability of the target item among all real users, whereas group attacks focus on a specific group (e.g., the female user group).}
  \label{fig:intro_1}
  \vspace{-0.40cm}
\end{figure}

In injective attacks, attackers take advantage of the system’s openness by registering a limited number of fake user accounts and then designing their profiles elaborately~\cite{legup,dada_nips,aush}.
As illustrated in Fig.~\ref{fig:intro_1}, these fake user profiles, combined with real ones, form the \textit{poisoned data}.
This poisoned data is then used to attack the black-box victim model deployed in the target RS, manipulating it to produce recommendations that align with the attackers' goals~\cite{fang2020influence,incomplete_kdd21}.
For example, full-user attacks aim to promote the recommendation probability of the target item across all real users~\cite{wang2024poisoning,guo2023targeted}, whereas group attacks focus on a specific group~\cite{uba_www}.
To carry out injective attacks, existing works have proposed three main categories of attack methods:~heuristic-based, neural network-driven, and gradient-based attacks.
Heuristic-based attacks rely on manually crafted rules to design fake user profiles~\cite{burke2005limited,yang2017fake}.
Neural network-driven attacks optimize the parameters of neural networks to generate influential fake user behaviors to achieve the attack objective~\cite{fan2021attacking,wu2021triple}. 
Gradient-based attacks maximize the sophisticated attack objective by optimizing parameterized fake profiles via gradients~\cite{fang2020influence,dada_nips,incomplete_kdd21}.

\begin{small}
\begin{figure*} [t]
  \begin{center}
  \includegraphics[scale=0.83]{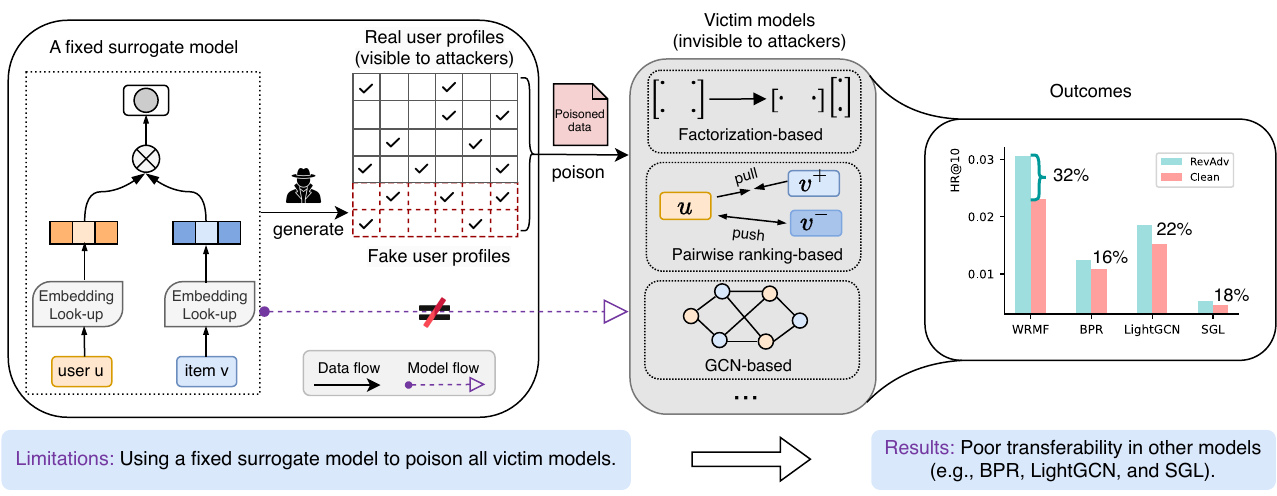}
  \end{center}
  \caption{
  Illustration of the origin of our work’s motivation.
  Since the victim model is inaccessible to attackers, existing methods use a fixed surrogate model to mimic it for generating poisoned data.
  These methods first ensure the effectiveness of poisoned data on the surrogate model.
Then, they hope it will be equally effective on all unseen victim models.
We argue that these methods neglect structural discrepancies between the surrogate and victim models, which inevitably leads to impaired attack transferability.
The experimental results presented on the right highlight the limitations.
Specifically, the RevAdv attack~\cite{revAdv} exhibits a good attack performance on the surrogate WRMF~\cite{wrmf}.
However, when the victim models are BPR~\cite{bpr2012}, LightGCN~\cite{lightgcn_he}, and SGL~\cite{sgl}, the performance gains from the attack are significantly diminished, indicating poor transferability. ``Clean'' refers to the results on the unattacked dataset.} 
  \label{fig:intro_2}
  \vspace{-0.40cm}
\end{figure*}
\end{small} 

Gradient-based attacks have emerged as the predominant methods for injective attacks, owing to their demonstrated performance advantages~\cite{dada_nips,cheng2024towards}.
As illustrated in Fig.~\ref{fig:intro_2}, the victim model deployed in the target RS is typically invisible to attackers, making it challenging for gradient-based attacks to obtain its gradients directly.
To circumvent this limitation, they use a fixed surrogate model to mimic the potential victim model.
The generated poisoned data is then expected to be effective against various unseen victim models.
This ``one poisons all'' strategy, however, is crucially dependent on a precarious assumption:~\textit {Poisoned data generated for the surrogate model can be used to attack other victim models}. 
Existing methods relying on this greedy assumption neglect structural discrepancies between the surrogate and victim models~\cite{revAdv,dada_nips,incomplete_kdd21}.
This oversight fundamentally limits the transferability of their attacks.
Our empirical validation in Fig.~\ref{fig:intro_2} also substantiates this limitation.
Specifically, we use the representative gradient-based method RevAdv~\cite{revAdv} as an example to inject fake users into a popular dataset Movielens-1M~\cite{movielens1}.
RevAdv employs WRMF~\cite{wrmf} as the surrogate model.
The objective is to increase the recommendation probability of five randomly selected target items across all real users. 
To evaluate the effectiveness of the attack, we adopt the Hit Ratio at 10 (HR@10) as the evaluation metric.
Since we cannot control the retraining process of the poisoned data, it may be used in various victim models, such as the surrogate WRMF~\cite{wrmf}, the pairwise ranking-based BPR~\cite{bpr2012}, and the GCN-based models (LightGCN~\cite{lightgcn_he} and SGL~\cite{sgl}).
Baseline performance (denoted as ``Clean'') represents the results on the unattacked dataset.
While effective against WRMF ($\uparrow$32\% compared to ``Clean''), the poisoned data exhibits poor transferability in BPR ($\uparrow$16\%), LightGCN ($\uparrow$22\%), and SGL ($\uparrow$18\%).
Fig.~\ref{fig:intro_2} verifies that relying solely on the inherent transferability of poisoned data is precarious and exhibits overfitting to the surrogate model. 
We should explicitly accommodate model structure shifts to enhance the transferability.

Intuitively, if we can iteratively optimize the poisoning effect against the \textbf{worst-case model} (the victim RS model that has the worst poisoning effect) rather than a fixed surrogate model, the generated poisoned data would be better transferred to other victim models~\cite{he2024sharpness}.
However, exactly identifying the worst-case model among a large space of potential victim models during the attack process is computationally intractable.
Meanwhile, sharpness-aware minimization has demonstrated notable success in enhancing model generalization by seeking the maximum loss within a local neighborhood of model parameters and then minimizing it~\cite{sharpness_mini,chen2023does, he2024sharpness}.
Motivated by this insight, we propose a novel attack method called \textbf{Sharp}ness-\textbf{A}ware \textbf{P}oisoning (\name), which seeks the approximately worst-case model via sharpness-aware minimization principle.
Specifically, \name~reformulates the attack process by extending the original min-min bi-level optimization into a min-max-min tri-level optimization problem.
The introduced maximization step performs a bounded perturbation to seek the worst-case model, leveraging the sharpness-aware minimization principle.
Distinct from existing methods that generate poisoned data for a fixed surrogate model~(i.e., without the maximization step), \name~dynamically targets the worst-case victim model, thereby mitigating the overfitting to the surrogate model.
Crucially, based on the characteristics of RS models, we provide a theoretical analysis of the transferability of sharpness-aware attacks.
In summary, our key contributions are as follows:
\begin{itemize}[leftmargin=*,topsep=0pt,parsep=0pt]
    \item To the best of our knowledge, we are the first to point out the challenge of overfitting to the surrogate model in injective attacks. 
    Our findings reveal that when the victim model’s structure shifts, the transferability of the poisoning effect inevitably suffers.
    
    \item 
    To overcome this challenge, we propose \name, which introduces sharpness-aware minimization principle to approximate the worst-case model.
    We then generate poisoned data for this worst-case model instead of a fixed surrogate model.
    Furthermore, we provide a theoretical analysis demonstrating that optimizing poisoned data against worst-case models can improve attack transferability.
    \item 
    Technically, \name~formulates injective attacks as a sharpness-aware min-max-min tri-level optimization problem. Moreover, \name~can be seamlessly integrated with many existing gray-box attack methods to enhance the transferability.
    
    \item We conduct comprehensive experimental studies and demonstrate that \name~significantly enhances the transferability of many representative victim models.
\end{itemize}

%% file: content/3-pre.tex
\begin{table}[!t]
	\centering
	\caption{Mathematical Notations.}
	\setlength\tabcolsep{8pt}{
	\begin{tabular}{cl}
		\toprule
		\textbf{Notation} & \textbf{Description}\\
		\midrule
            ${u}^r$, ${u}^f$ & Real user and fake user. \\ 
		${v}$, ${v}^t$ & Item and target item. \\ 
           
            \midrule
		$U^r$, $U^f$, $U$ & Real user set, fake user set and all user set. \\ 
		$V$, $V^t$ & Item set and target item set. \\
		
	\midrule
	$\mathbf{R}^r$, $\mathbf{R}^f$, $\mathbf{R}$ & Real user data, fake user data, and all user data. \\ 
         $\hat{\mathbf{R}}$ & Predicted scores. \\ 
        \midrule
	$\mathcal{M}^s$, $\mathcal{M}^v$ & Surrogate model and victim model. \\ 
    $\mathcal{M}^{worst}$ & Worst-case model. \\
    \midrule
    $\Omega$ & Victim model space.\\
        \midrule
	$\theta^*$ & Optimal parameter of the surrogate model. \\ 
        $\theta^{\Delta}$ & A bounded perturbation to $\theta^*$. \\
	\midrule
	$\mathcal{L}_{rec}$ & Recommendation loss. \\ 
    $\mathcal{L}_{atk}$ & Attack objective loss. \\
	\bottomrule
	\end{tabular}
	\label{tab::symbol}
	}
    \vspace{-0.30cm}
\end{table}
\section{Preliminaries}
\subsection{Recommender System under Injective Attack}
In a recommender system under injective attack, there are three entity sets: a real user set $U^r=\{u^r_1,u^r_2,...,u^r_{|U^r|}\}$, a fake user set $U^f=\{u^f_1,u^f_2,...,u^f_{|U^f|}\}$ and an item set $V=\{v_1,v_2,...,v_{|V|}\}$. 
User-item interactions are recorded as feedback data, categorized into explicit feedback (e.g., direct ratings like 5-star scores) and implicit feedback (e.g., indirect signals such as purchases or views). Due to the prevalence of implicit feedback in real-world applications, we focus on this type in our work. 
To represent user-item interactions, we define a binary matrix $\mathbf{R}=\begin{bmatrix}
\mathbf{R}^{r} \\
\mathbf{R}^{f}
\end{bmatrix} \in \{0,1\}^{{(|U^r|+|U^f|)}\times {|V|}}$, where $r_{uv}=1$ indicates positive feedback from user $u$ on item $v$, and $r_{uv}=0$ denotes an unknown interaction.
RS leverages historical user-item interactions to predict a relevance score matrix $\hat{\mathbf{R}}\in \mathbb{R}^{{(|U^r|+|U^f|)}\times {|V|}}$, where higher scores indicate stronger relevance between a user and an item. The objective of RS is to rank items for each user based on these predicted relevance scores, ensuring that the most relevant items are prioritized.
\subsection{Injective Attacks on Recommender System}
In this section, we first present the formal definition of injective attacks.
Then, we revisit the strategy employed by gradient-based attack methods under a gray-box setting. 
Given the real data $\mathbf{R}^r \in \{0,1\}^{{|U^r|}\times {|V|}}$, the target RS (a.k.a. victim model) $\mathcal{M}^v$, and a limited fake users $U^f=\{u^f_1,u^f_2,...,u^f_{|U^f|}\}$, the corresponding fake data $\mathbf{R}^f \in \{0,1\}^{{|U^f|}\times {|V|}}$ can be learned to minimize the attack loss function $\mathcal{L}_{atk}$ as follows:
\begin{gather}
    \min_{\mathbf{R}^f} \mathcal{L}_{atk}(\mathcal{M}^v(\theta^*;\mathbf{R}^r)), \label{eq:bi_11}\\
    \text{s.t.} \quad \theta^* = \arg\min_{\theta} ( \mathcal{L}_{rec}(\mathbf{R}, \mathcal{M}^v(\theta;\mathbf{R})),
\label{eq:bi_12}
\end{gather}
where
$\mathbf{R}=\begin{bmatrix}
\mathbf{R}^{r} \\
\mathbf{R}^{f}
\end{bmatrix}$.
$\mathcal{L}_{rec}$ is the recommendation training loss.
$\mathcal{M}^v(\theta;\mathbf{R})$ and $\mathcal{M}^v(\theta^*;\mathbf{R}^r)$ represent the predictions of the victim model $\mathcal{M}^v$ for all users and real users, respectively, given parameters $\theta$ and $\theta^*$.
For clarity, we omit the attacker’s capability constraints, $|U^f|\le \delta|U^r|$ and $||\mathbf{R}^f[u^f]||_0 \le N$, to keep the formula concise.
$\delta$ represents the proportion of fake users to real users, and $N$ denotes the maximum number of items each fake user can interact with (i.e., profile size).
It is a non-trivial task to generate the exact optimal fake data $\mathbf{R}^f$ that achieves the minimum attack loss, owing to the exponential candidate search space $\mathcal{O}({|V|}^{N\cdot|U^f|})$.
To obtain an approximate optimal solution, there are three types of attack methods:~heuristic-based, neural network-driven, and gradient-based attacks.
In this paper, we focus on gradient-based attacks due to their demonstrated effectiveness in prior studies.
We perform attacks under a gray-box setting, where attackers have access only to the real data $\mathbf{R}^r$, but no knowledge of the victim model $\mathcal{M}^v$.
This setting is reasonable, as proprietary recommendation algorithms are rarely disclosed by platforms. Nevertheless, attackers can obtain user behavior data through open APIs or by scraping public user profiles~\cite{dada_nips,uba_www}.
Under a gray-box setting, existing gradient-based attack methods use a fixed surrogate model $\mathcal{M}^s$ to replace the inaccessible victim model $\mathcal{M}^v$ in Eq.\eqref{eq:bi_11} and \eqref{eq:bi_12}.
After the replacement, the inner optimization (i.e., Eq.\eqref{eq:bi_12}) aims to mimic the retraining process with the poisoned data $\mathbf{R}$ using the surrogate model $\mathcal{M}^s$.
The outer optimization (i.e., Eq.\eqref{eq:bi_11}) is responsible for optimizing the fake data $\mathbf{R}^f$ to fulfill the attack goals.
Fig.~\ref{fig:flow} shows the flow chart of fake profiles instantiated.
Although solving such a bi-level optimization problem ensures satisfactory attack performance on the surrogate model, the transferability of the attack to other victim models remains uncertain and largely uncontrolled.

\begin{figure}[t]
  \begin{center}
    \includegraphics[scale=0.50]{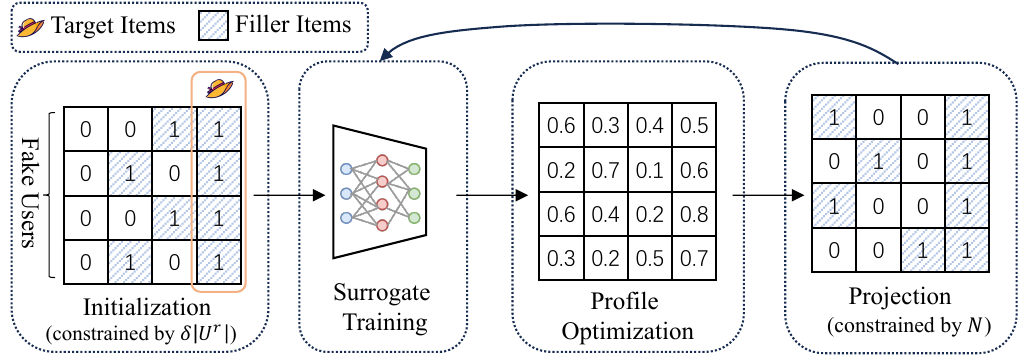}
  \end{center}
  \vspace{-0.40cm}
 \caption{A flow chart of fake profiles instantiated.}
  \label{fig:flow}
  \vspace{-0.40cm}
\end{figure}

%% file: content/4-method.tex
\begin{small}
\begin{figure*} [t]
  \begin{center}
  \includegraphics[scale=0.87]{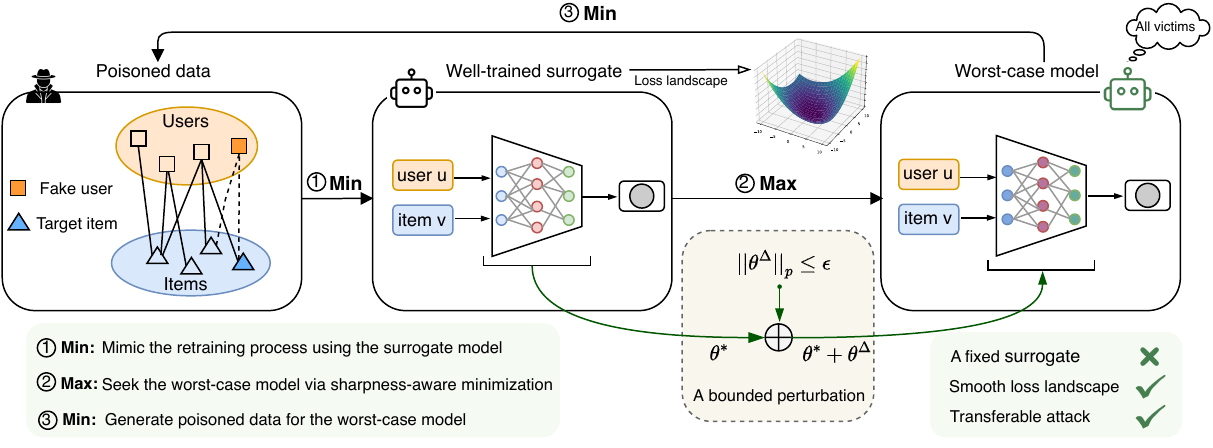}
  \end{center}
    \vspace{-0.30cm}
  \caption{The overall illustration of our method \name. We propose a sharpness-aware tri-level optimization, which seeks the worst-case model (i.e., the victim model with the worst poisoning effect) under a bounded perturbation to generate robust poisoned data.}
  \label{fig:method_8}
  \vspace{-0.30cm}
\end{figure*}
\end{small}

\section{Methodology}
\label{sec:method}

\subsection{The Objective of \name}
We begin by theoretically defining the victim model space, denoted as $\Omega$. This space encompasses all potential victim models whose objective is to minimize the recommendation training loss $\mathcal{L}_{rec}$ on the poisoned data $\mathbf{R}$:
\begin{equation}
    \Omega = \{ \mathcal{M}^{{v}} | \mathcal{L}_{rec}(\mathbf{R},\mathcal{M}^{v}(\mathbf{R})) \leq \rho \},
    \label{eq:victim_space}
\end{equation}
where $\mathcal{M}^v(\mathbf{R})$ represents the predictions of the victim model $\mathcal{M}^v$ for all users.
Mathematically, the victim model space $\Omega$ is defined as the set of all models such that the recommendation loss does not exceed a predefined threshold $\rho$. 
This formulation ensures that $\Omega$ includes only those models that are well-trained on the poisoned data.
Then, we identify the worst-case model $\mathcal{M}^{{worst}}$ in the space $\Omega$ by maximizing the attack loss $\mathcal{L}_{atk}$. The worst-case model $\mathcal{M}^{{worst}}$ can be formulated as:
\begin{equation}
    \mathcal{M}^{worst} = \arg\max_{\mathcal{M}^{v} \in \Omega} \mathcal{L}_{atk}(\mathcal{M}^{v}(\mathbf{R}^r)).
    \label{eq:find_worst}
\end{equation}
According to Eq.\eqref{eq:find_worst}, we can obtain:
\begin{equation}
    \mathbb{E} \left[ \mathcal{L}_{atk}(\mathcal{M}^{v}(\mathbf{R}^r)) \right] \leq \mathcal{L}_{atk}(\mathcal{M}^{worst}(\mathbf{R}^r)).
    \label{eq:expect3}
\end{equation}
Eq.\eqref{eq:expect3} states that $\mathcal{L}_{atk} (\mathcal{M}^{worst}(\mathbf{R}^r))$ is the upper bound of the expected attack loss over all victim models. By optimizing the fake data $\mathbf{R}^f$ to minimize this upper bound through:
\begin{equation}
    \min_{\mathbf{R}^f} \mathcal{L}_{atk}(\mathcal{M}^{worst}(\mathbf{R}^r)),
\end{equation}
we aim to reduce the average attack loss across all victim models. 
In other words, improving the poisoning effect on the worst-case model $\mathcal{M}^{worst}$ facilitates the propagation of the poisoning effect to other victim models, thereby enhancing the attack transferability.

However, directly solving Eq.~\eqref{eq:victim_space} and \eqref{eq:find_worst} is computationally intractable, and it is challenging to obtain an exact solution.
To address this, we leverage sharpness-aware minimization principles to approximate the worst-case solution~\cite{sharpness_mini,chen2023does,he2024sharpness}.
Specifically, we approximate the worst-case model by performing a localized search in the neighborhood of the surrogate model, formulated as:
\begin{equation}
\begin{gathered}
    \mathcal{L}_{atk}(\mathcal{M}^{worst}(\mathbf{R}^r)) \approx \max_{||\theta^\Delta||_p\leq \epsilon} \mathcal{L}_{atk}(\mathcal{M}^s(\theta^*+\theta^\Delta;\mathbf{R}^r)), \\
    \text{s.t.}~\theta^* = \arg\min_{\theta} ( \mathcal{L}_{rec}(\mathbf{R}, \mathcal{M}^s(\theta;\mathbf{R}))).
\end{gathered}
\label{eq:bi_2}
\end{equation}
As formulated in Eq.\eqref{eq:bi_2}, we first train the surrogate model $\mathcal{M}^s$ on the poisoned data $\mathbf{R}=\begin{bmatrix} \mathbf{R}^{r} \\ \mathbf{R}^{f} \end{bmatrix}$ using a recommendation loss function, ensuring that the optimized parameter $\theta^*$ satisfies the constraint $\mathcal{L}_{rec}(\mathbf{R},\mathcal{M}^s(\theta^*;\mathbf{R})) \leq \rho$.
We then introduce a bounded perturbation $\theta^\Delta$ to $\theta^*$, constrained by $\|\theta^\Delta\|_p \leq \epsilon$, and maximize the attack loss $\mathcal{L}_{atk}$ over the perturbed $\theta^* + \theta^\Delta$ to seek the worst-case model.  
This perturbation explores local regions around $\theta^*$, which are presumed to lie within $\Omega$. 
Consequently, the perturbed model exhibits higher attack loss than $\theta^*$, mimicking the worst-case behaviour while remaining within the plausible victim model space.
By replacing the fixed surrogate model $\mathcal{M}^s$ with the worst-case model $\mathcal{M}^{worst}$, we formulate the \textit{\textbf{Sharpness-aware Poisoning}} (\name) problem as follows:
\begin{gather}
\overbrace{\min_{\mathbf{R}^f}\underbrace{\max_{||\theta^\Delta||_p\leq \epsilon} \mathcal{L}_{atk}(\mathcal{M}^s(\theta^*+\theta^\Delta;\mathbf{R}^r))}_\text{seek the worst-case model}}^\text{sharpness-aware worst-case~optimization}, 
     \label{eq:bi_31}\\
   \text{s.t.}~\theta^* = \arg\min_{\theta} ( \mathcal{L}_{rec}(\mathbf{R}, \mathcal{M}^s(\theta;\mathbf{R}))).\label{eq:bi_32}
\end{gather}
Formally, \name~extends the bi-level optimization in Eq.\eqref{eq:bi_11} and \eqref{eq:bi_12} to a sharpness-aware tri-level optimization. The term in the outer optimization can be defined as the \textit{sharpness-aware attack objective}~(i.e., $\max_{||\theta^\Delta||_p\leq \epsilon} \mathcal{L}_{atk}(\mathcal{M}^s(\theta^*+\theta^\Delta;\mathbf{R}^r))$).
\textit{SharpAP} improves existing methods by replacing their attack objective with a sharpness-aware counterpart. 
By iteratively optimizing the poisoning effect against the worst-case model~(i.e., sharpness-aware worst-case optimization), \name~explicitly treats attack transferability as a learning objective. Consequently, it mitigates the overfitting issue and enhances the transferability.
The detailed tri-level optimization process is shown in Fig.~\ref{fig:method_8}.

\begin{algorithm}[tb]
    \caption{Sharpness-aware poisoning attack (\name)}
    \label{alg:algorithm}
    \textbf{Input}: Real user data $\textbf{R}^r$; learning rate for inner and outer objective: $\lambda_1$ and $\lambda_2$; max iteration for inner and outer objective: T and L.\\
    \textbf{Parameter}: Parameter $\theta$ for the surrogate model $\mathcal{M}^s$ \\
    \textbf{Output}:Fake user data $\mathbf{R}^f$ 
    \begin{algorithmic}[1] 
    \State Initialize fake user data $\mathbf{R}^f$ and the parameter $\theta$ of the surrogate model $\mathcal{M}^s$
         \For{$l=1$ to $L$}
        \Statex \hspace*{0.17in}  \textbf{// (1) Surrogate model optimization} 
        \For{$t=1$ to $T$}
            \State Optimize surrogate model parameter with SGD:
\Statex \hspace{\algorithmicindent}$\theta^{(t)} \leftarrow \theta^{(t-1)}-\lambda_1\nabla_{\theta^{(t-1)}}(\mathcal{L}_{rec}(\mathbf{R},\mathcal{M}^s(\theta^{(t-1)},\mathbf{R})))$
        \EndFor
        \State $\theta^*\leftarrow\theta^{(T)}$
        \Statex \hspace*{0.17in} \textbf{// (2) \name}
        \State Seek the worst-case model according to Eq.\eqref{eq:bi_2}
        \State Solve ${\theta}^\Delta$ according to Eq.\eqref{eq:worst_theta2}
        \State Compute gradients $\nabla_{\mathbf{R}^f}\mathcal{L}_{atk}$ according to Eq.\eqref{eq:final_approx}
        \Statex \hspace*{0.17in} \textbf{// (3) Projected gradient descent}
        
        \State Update fake user data: $\mathbf{R}^f=Proj(\mathbf{R}^{f}-\lambda_2\nabla_{\mathbf{R}^f}\mathcal{L}_{atk})$
        \EndFor
        \State \textbf{return} $\mathbf{R}^f$ 
    \end{algorithmic}
\end{algorithm}

\subsection{The Optimization of \name}
Compared to existing attack objectives, sharpness-aware
attack objective introduces an additional worst-case perturbation ${\theta}^\Delta$ to the surrogate parameter $\theta^*$. 

Therefore, we first need to solve for $\hat{\theta}^\Delta$.
Specifically, we follow the approach in \cite{sharpness_mini} to approximate the maximization problem via a first-order Taylor expansion, as follows:
\begin{equation}
\begin{aligned}
\hat{\theta}^\Delta = \epsilon &\cdot \text{sign} \left( \nabla_\theta \mathcal{L}_{atk}(\mathcal{M}^s(\theta^*;\mathbf{R}^r)) \right) \left| \nabla_\theta \mathcal{L}_{atk}(\mathcal{M}^s(\theta^*;\mathbf{R}^r)) \right|^{q-1} \\
& \cdot \left( \| \nabla_\theta \mathcal{L}_{atk}(\mathcal{M}^s(\theta^*;\mathbf{R}^r)) \|^q_q \right)^{1/p},
    \label{eq:worst_theta}
\end{aligned}
\end{equation}
where $1/p+1/q=1$, and we set $p=2$ as \cite{sharpness_mini}.
The computation of Eq.\eqref{eq:worst_theta} can be further simplified as:
\begin{equation}
\begin{aligned}
\hat{\theta}^\Delta = \epsilon &\cdot \nabla_\theta \mathcal{L}_{atk}(\mathcal{M}^s(\theta^*;\mathbf{R}^r)) 
  \left( \| \nabla_\theta \mathcal{L}_{atk}(\mathcal{M}^s(\theta^*;\mathbf{R}^r)) \|^2_2 \right)^{1/2}.
    \label{eq:worst_theta2}
\end{aligned}
\end{equation}
Then, we can have the approximation to calculate the worst attack effect of the parameter $\theta^*$ under a bounded perturbation via replacing $\theta^*$ with $\theta^*+\hat{\theta}^\Delta$, as follows:
\begin{equation}
\begin{gathered}
\nabla_{\mathbf{R}^f}\max_{||\theta^\Delta||_p\leq \epsilon} \mathcal{L}_{atk}(\mathcal{M}^s(\theta^*+\theta^\Delta;\mathbf{R}^r)) \\
\approx \nabla_{\mathbf{R}^f}\mathcal{L}_{atk}(\mathcal{M}^s(\theta;\mathbf{R}^r))|_{\theta=\theta^*+\hat{\theta}^\Delta}.
    \label{eq:final_approx}
\end{gathered}
\end{equation}
Since the fake data $\mathbf{R}^f$ to be generated are discrete and subject to the profile size $N$ constraint, we introduce constrained gradient projection descent, as follows:
\begin{equation}
\begin{gathered}
\mathbf{R}^f=Proj(\mathbf{R}^{f}-\lambda_2\nabla_{\mathbf{R}^f}\mathcal{L}_{atk}(\mathcal{M}^s(\theta;\mathbf{R}^r))|_{\theta=\theta^*+\hat{\theta}^\Delta}),
    \label{eq:pgd2} \\
    \text{where}~Proj(\hat{r}_{u^fv}) =
\begin{cases} 
1, &   v \in N_{u^f}, \\
0, & otherwise,
\end{cases}
\end{gathered}
\end{equation}
where $\hat{r}_{u^fv}$ is the predicted score that fake user $u^f$ gives to item $v$ and 
$N_{u^f}$ is Top-N largest item set of fake user $u^f$ according to the score $\hat{r}_{u^fv}$. 
$N$ is the maximum number of items each fake user $u^f \in U^f$ can interact with. 

We present detailed \name~in Algorithm \ref{alg:algorithm}.
Compared to existing gradient-based methods~\cite{revAdv,incomplete_kdd21,dada_nips}, our method only introduces the calculation of a worst-case perturbation for the surrogate model. 
This calculation does not require additional attack knowledge or capabilities, and the computational cost is almost negligible~\cite{sharpness_mini}.

\subsection{\name~under Full-user and Group Attacks} \label{sub:full_and_group}
By replacing the attack loss function $\mathcal{L}_{atk}$ in Eq.\eqref{eq:bi_31}, different attack methods targeting specific goals (e.g., full-user attacks or group attacks) can be realized~\cite{dada_nips,li2016data, incomplete_kdd21,revAdv}.
Here, we use RevAdv~\cite{revAdv} as the backbone to implement the proposed \name~under full-user and group attacks as an example.
The goal of full-user attacks is to increase the recommendation probability of target items $v^t \in V^t=\{v^t_1,v^t_2,...,v^t_{|V^t|}\}$ for all real users in $U^r$. 
The attack loss function $\mathcal{L}_{atk}$ can be formulated as: 
\begin{equation}
\mathcal{L}^{full}_{atk}(\mathbf{\hat{R}}^r) = - \sum_{v^t \in V^t}\sum_{u^r \in U^r} \log \left( \frac{\exp(\hat{r}_{u^rv^t})}{\sum_{v \in {V}} \exp(\hat{r}_{u^rv})} \right).
    \label{eq:full_obj}
\end{equation}
The goal of group attacks is to target a specific group while minimizing the impact on other groups. 
We categorize real users in $U^r$ into two groups, $U_0$ and $U_1$, based on their binary attribute values. $U_0$  represents the set of users with an attribute value of 0, while  $U_1$ represents the set of users with an attribute value of 1.
Here, we attack the group $U_0$ as an example.

\begin{equation}
\begin{aligned}
\mathcal{L}^{group}_{atk}(\mathbf{\hat{R}}^r) = \sum_{v^t \in V^t}
(&\frac{1}{|U_1|} \sum_{u \in U_1} {\mathbb{I}_{v^t\notin \Gamma_u}} \hat{r}_{uv^t} - \frac{1}{|U_0|} \sum_{u \in U_0} {} \hat{r}_{uv^t}),
    \label{eq:group_obj}
    \end{aligned}
\end{equation}
where $\Gamma_u$ is Top-K ranked items for user $u$, $\mathbb{I}$ is an indicator function, if target item $v^t$ is not in the set $\Gamma_u$, then $\mathbb{I}=1$, otherwise $\mathbb{I}=0$. We can replace $\mathcal{L}_{atk}$ in Eq.\eqref{eq:bi_31} with Eq.\eqref{eq:full_obj} and Eq.\eqref{eq:group_obj} to implement sharpness-aware full-user attacks and group attacks, respectively.
For $\mathcal{L}_{rec}$, we employ a widely used weighted mean squared error loss function~\cite{revAdv,dada_nips} associated with implicit feedback matrix factorization. The specific formulation is:
\begin{equation}
    \mathcal{L}_{rec} = \sum_{u, v} c_{uv} (r_{uv} - \mathbf{u}_u^T \mathbf{v}_v)^2 + \lambda \left(\sum_u \|\mathbf{u}_u\|^2 + \sum_v \|\mathbf{v}_v\|^2\right),
\end{equation}
where $c_{uv}$ is the instance weight to differentiate observed and missing interactions. $\mathbf{u}_u$ and $\mathbf{v}_v$ are the user and item latent vectors, respectively. $\lambda$ is the regularization parameter.
The hyperparameter settings in the $\mathcal{L}_{rec}$ loss are kept consistent with those in~\cite{revAdv} and~\cite{dada_nips} for fair comparison.
\vspace{-0.1cm}
\subsection{Theoretical Analysis}
Representative RS models~\cite{wrmf,bpr2012,lightgcn_he,simgcl} typically rely on learnable user and item embeddings to encode latent preferences. Despite differences in loss functions or aggregation mechanisms, these models map the same user–item interaction data into a common embedding space.
This shared embedding-based representation implies that retraining different RS models on the same data leads to parameter solutions that may not be arbitrarily distant, but instead tend to reside in a relatively constrained region of the embedding space.
Therefore, it is reasonable to assume that the optimal parameters (embeddings) of a victim model $\theta^v$ lie within a bounded neighborhood of the surrogate model parameters (embeddings) $\theta^*$: $||\theta^v - \theta^*||_2 \le \epsilon$.
\begin{proposition}[Transferability Bound via Sharpness-Aware Minimization]
\label{prop:2}
Let $\theta^*$ and $\theta^v$ denote the parameters of the surrogate model and the unknown victim model, respectively. We assume the victim model resides within an $\epsilon$-neighborhood of the surrogate model, i.e., $\|\theta^v - \theta^*\|_2 \le \epsilon$ with $\|\zeta\| \leq \epsilon$. Furthermore, assume the attack loss function $\mathcal{L}_{atk}(\theta)$ is $L$-Lipschitz smooth, implying $\|\nabla\mathcal{L}_{atk}(\theta_1) - \nabla\mathcal{L}_{atk}(\theta_2)\|_2 \leq L\|\theta_1 - \theta_2\|_2$. Under these conditions, the attack loss on the victim model is upper-bounded by the surrogate loss plus a sharpness-related term:
\begin{equation}
    \label{eq:span1}
   \mathcal{L}_{atk}(\theta^v) \leq \underbrace{\mathcal{L}_{atk}(\theta^*)}_{\text{Surrogate Performance}} + \underbrace{\epsilon \|\nabla \mathcal{L}_{atk}(\theta^*)\|_2 }_{\text{Local Sharpness}}+ \frac{L\epsilon^2}{2}.
\end{equation}
Furthermore, the maximization step in \name\ serves as a proxy for minimizing this upper bound.
\end{proposition}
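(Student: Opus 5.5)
The bound follows from the standard descent lemma for functions with Lipschitz gradients; the only other tool is the Cauchy--Schwarz inequality. Write $\zeta = \theta^v - \theta^*$, so $\|\zeta\|_2 \le \epsilon$ by the neighborhood assumption. Consider the path $g(t) = \mathcal{L}_{atk}(\theta^* + t\zeta)$ for $t \in [0,1]$. By the fundamental theorem of calculus,
\begin{equation*}
\mathcal{L}_{atk}(\theta^v) - \mathcal{L}_{atk}(\theta^*) = \int_0^1 \nabla \mathcal{L}_{atk}(\theta^* + t\zeta)^\top \zeta \, dt .
\end{equation*}
Add and subtract $\nabla\mathcal{L}_{atk}(\theta^*)^\top \zeta$ inside the integral.

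The integral then splits into two terms. The first is the linear term $\nabla \mathcal{L}_{atk}(\theta^*)^\top \zeta$. Cauchy--Schwarz bounds it by $\|\nabla\mathcal{L}_{atk}(\theta^*)\|_2\|\zeta\|_2 \le \epsilon \|\nabla \mathcal{L}_{atk}(\theta^*)\|_2$, which is the ``Local Sharpness'' term. The second is the remainder
\begin{equation*}
\int_0^1 \bigl(\nabla\mathcal{L}_{atk}(\theta^*+t\zeta) - \nabla\mathcal{L}_{atk}(\theta^*)\bigr)^\top \zeta \, dt .
\end{equation*}
Cauchy--Schwarz together with the $L$-Lipschitz gradient assumption bounds it by $\int_0^1 L t \|\zeta\|_2^2\,dt = \tfrac{L}{2}\|\zeta\|_2^2 \le \tfrac{L\epsilon^2}{2}$. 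Adding the two bounds gives Eq.~\eqref{eq:span1}. This part is routine. The one technical caveat is that we need $\mathcal{L}_{atk}$ to be differentiable along the segment between $\theta^*$ and $\theta^v$, which the smoothness hypothesis already supplies.

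For the second claim, that the maximization step is a proxy for minimizing the upper bound, we reuse the first-order expansion behind Eq.~\eqref{eq:worst_theta}. For $p=2$,
\begin{equation*}
\max_{\|\theta^\Delta\|_2\le\epsilon}\mathcal{L}_{atk}(\theta^*+\theta^\Delta) \approx \mathcal{L}_{atk}(\theta^*) + \max_{\|\theta^\Delta\|_2\le\epsilon} \nabla\mathcal{L}_{atk}(\theta^*)^\top\theta^\Delta = \mathcal{L}_{atk}(\theta^*) + \epsilon\|\nabla\mathcal{L}_{atk}(\theta^*)\|_2 ,
\end{equation*}
where the maximum is attained at the normalized-gradient perturbation $\hat{\theta}^\Delta$. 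So, up to $O(\epsilon^2)$, the sharpness-aware attack objective in Eq.~\eqref{eq:bi_31} equals the first two terms of the bound. The remaining $\tfrac{L\epsilon^2}{2}$ term does not depend on $\mathbf{R}^f$ once $L$ and $\epsilon$ are fixed. Hence minimizing the sharpness-aware objective over $\mathbf{R}^f$ minimizes the data-dependent part of the transferability bound.

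To make the error precise, apply the same descent-lemma argument in both directions. The exact maximum differs from $\mathcal{L}_{atk}(\theta^*)+\epsilon\|\nabla\mathcal{L}_{atk}(\theta^*)\|_2$ by at most $\tfrac{L\epsilon^2}{2}$. Therefore
\begin{equation*}
\mathcal{L}_{atk}(\theta^v) \le \max_{\|\theta^\Delta\|_2\le\epsilon}\mathcal{L}_{atk}(\theta^*+\theta^\Delta) + L\epsilon^2 ,
\end{equation*}
which is a rigorous form of the proxy statement.

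The main obstacle lies in this second part, not in the inequality. Here $\theta^*$ depends on $\mathbf{R}^f$ through the inner $\arg\min$ in Eq.~\eqref{eq:bi_32}. We also need to treat $\theta^v$ as staying in the $\epsilon$-ball as $\mathbf{R}^f$ varies, so that the bound holds uniformly and minimizing it is meaningful. I would state this uniformity explicitly as part of the neighborhood assumption. I would also note that $L$ is taken independent of $\mathbf{R}^f$, since otherwise the last term would not be constant.
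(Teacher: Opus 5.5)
Your proof is correct and follows the paper's route: the $L$-smooth quadratic upper bound (which you derive from the integral form rather than quote), Cauchy--Schwarz on the linear term, and the first-order linearization of the inner maximization giving $\mathcal{L}_{atk}(\theta^*)+\epsilon\|\nabla\mathcal{L}_{atk}(\theta^*)\|_2$. Your extra ``rigorous'' bound $\mathcal{L}_{atk}(\theta^v)\le\max_{\|\theta^\Delta\|_2\le\epsilon}\mathcal{L}_{atk}(\theta^*+\theta^\Delta)+L\epsilon^2$ is valid but loose, because $\theta^v=\theta^*+\zeta$ is itself feasible for the maximization, so the inequality already holds without the $L\epsilon^2$ term and without any smoothness; what your two-sided argument actually establishes is the more informative $\mathcal{L}_{atk}(\theta^v)\le\mathcal{L}_{atk}(\theta^*+\hat{\theta}^\Delta)+L\epsilon^2$ with $\hat{\theta}^\Delta=\epsilon\nabla\mathcal{L}_{atk}(\theta^*)/\|\nabla\mathcal{L}_{atk}(\theta^*)\|_2$, i.e., for the perturbed model the algorithm really evaluates.
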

\begin{proof}
    Since $\mathcal{L}_{atk}$ is $L$-smooth, satisfying the quadratic upper bound property, we perform a Taylor expansion around $\theta^*$:
    \begin{equation}
        \mathcal{L}_{atk}(\theta^v) \le \mathcal{L}_{atk}(\theta^*) + \nabla\mathcal{L}_{atk}(\theta^*)^\top (\theta^v - \theta^*) + \frac{L}{2}\|\theta^v - \theta^*\|^2_2 .
    \end{equation}
    Let $\zeta = \theta^v - \theta^*$. Since the victim model lies within the $\epsilon$-neighborhood, we have $\|\zeta\|_2 \le \epsilon$. By applying the Cauchy-Schwarz inequality to the first-order term, we obtain:
    \begin{equation}
        \nabla\mathcal{L}_{atk}(\theta^*)^\top \zeta \le \|\nabla\mathcal{L}_{atk}(\theta^*)\|_2 \|\zeta\|_2 \le \epsilon \|\nabla\mathcal{L}_{atk}(\theta^*)\|_2 .
    \end{equation}
    Substituting this back into the quadratic upper bound yields:
    \begin{equation}
        \mathcal{L}_{atk}(\theta^v) \le \mathcal{L}_{atk}(\theta^*) + \epsilon \|\nabla\mathcal{L}_{atk}(\theta^*)\|_2 + \frac{L\epsilon^2}{2}.
    \end{equation}
    The \name\ objective explicitly maximizes the loss within the perturbation ball:
    \begin{equation}
    \begin{aligned}
        \max_{\|\theta^\Delta\| \le \epsilon} \mathcal{L}_{atk}(\theta^* + \theta^\Delta) &\approx \mathcal{L}_{atk}(\theta^*) + \max_{\|\theta^\Delta\| \le \epsilon} \nabla\mathcal{L}_{atk}(\theta^*)^\top \theta^\Delta \\
        &= \mathcal{L}_{atk}(\theta^*) + \epsilon \|\nabla\mathcal{L}_{atk}(\theta^*)\|_2.
    \end{aligned}
    \end{equation}
    Thus, by minimizing the worst-case loss, we are effectively minimizing the upper bound of $\mathcal{L}_{atk}(\theta^v)$.
\end{proof}
Traditional attacks minimize only $\mathcal{L}_{atk}(\theta^*)$. If the landscape is sharp (large second term), $\mathcal{L}_{atk}(\theta^v)$ can still be high.
\name~minimizes both surrogate performance and local sharpness, effectively flattening the loss landscape, thereby reducing the transferability gap.

\subsection{Time Complexity}
As shown in Algorithm~\ref{alg:algorithm}, the computational cost per outer iteration consists of three components: inner min, inner max, and outer min. For inner min, it involves updating surrogate parameters via SGD. The cost is approximately $O(T \cdot |U| \cdot \bar{n} \cdot d)$, where $\bar{n}$ is the average number of interacted items per user and $d$ denotes the embedding dimension.
For inner max, unique to~\name, requires computing gradients to find $\hat{\theta}^\Delta$. The cost is $O(|U| \cdot \bar{n} \cdot d)$, as the cost of calculating $\nabla_\theta \mathcal{L}_{atk}$ is similar to a standard backward pass.
For outer min, it requires one forward and backward pass, costing $O(|U| \cdot \bar{n} \cdot d)$.
Summing these components, the approximate computational complexity for the \name~attack is: $O( L \cdot (T + 2) \cdot |U| \cdot \bar{n} \cdot d )$.
The computational overhead introduced by SharpAP is limited to one additional backward pass per outer iteration compared to the bi-level optimization method.
Given that $T$ dominates the computational cost, this small addition (i.e., $O( L \cdot |U| \cdot \bar{n} \cdot d )$) confirms that \name~incurs marginal time costs while significantly boosting transferability.

%% file: content/5-experiment.tex
\section{Experiments}
In this section, we present extensive experiments on three real-world datasets to evaluate the effectiveness of our proposed \name. We begin by describing the experimental settings, followed by a comparison of the overall performance against state-of-the-art baselines. Finally, we provide a detailed analysis of \name.
\subsection{Experimental Settings}

\begin{table}[t]
    \centering
    \caption{Statistics of the three datasets. “Avg.” represents the average number of items interacted with by each user.}
    \begin{tabular}{c| c| c|c| c| c}
        \hline
        \textbf{Datasets} & \textbf{Users} & \textbf{Items} & \textbf{Ratings} & \textbf{Avg.}& \textbf{Density} \\ \hline
        {MovieLens-1M} & 6,014 & 3,232 & 226,310&38 & 1.17\% \\\hline
        {Gowalla} & 13,149 & 14,007 & 433,356&33 & 0.24\% \\\hline
        Amazon-book & 52,643 & 91,599 & 2,984,108&57 & 0.06\% \\\hline
    \end{tabular}
    
    \label{tab:dataset_stats}
\end{table}

\subsubsection{Datasets}
We select three real-world datasets for the experiments, i.e., MovieLens-1M \cite{movielens1}, Amazon-book \cite{amazon}, and  Gowalla \cite{cho2011friendship}.
To convert MovieLens-1M into an implicit feedback dataset, we follow prior works \cite{wang2020setrank,zhou2018deep}, treating interactions with a rating of 5 as positive feedback, while considering all other interactions as negative feedback.
For Gowalla, we adhere to the data processing procedure outlined in \cite{tang2018personalized,revAdv}. Specifically, we preprocess the raw data by removing cold-start users and items with fewer than 15 interactions. 
For Amazon-book, we follow~\cite{lightgcn_he} and use the 10-core setting to ensure that each user and item have at least 10 interactions.
We adopt a standard training/validation/test split of 7:1:2 across all datasets.
Table~\ref{tab:dataset_stats} summarizes the key statistics of the three datasets.

\begin{table*}[t]
  \centering
  \caption{A comparison of the performance of various attack methods on five representative victim RS, evaluated on \textbf{MovieLens-1M} dataset. The best results are highlighted in bold.}
  \vspace{-0.1cm}
  \label{tab:m1m}
  \begingroup
  \scalebox{0.99}{\begin{tabular}{l| cc| cc| cc| cc| cc}
    \toprule
    \multirow{2}{*}{\textbf{Attacker}}
      & \multicolumn{2}{c}{\textbf{WRMF}}
      & \multicolumn{2}{c}{\textbf{BPR}}
      & \multicolumn{2}{c}{\textbf{LightGCN}}
      & \multicolumn{2}{c}{\textbf{SGL}} & \multicolumn{2}{c}{\textbf{SimGCL}} \\
    \cmidrule(lr){2-3} \cmidrule(lr){4-5} \cmidrule(lr){6-7} \cmidrule(lr){8-9}\cmidrule(lr){10-11}
      & H@20 & N@20
      & H@20 & N@20
      & H@20 & N@20
      & H@20 & N@20& H@20 & N@20 \\
    \midrule
   Clean & 0.0614\tiny±5e-3 & 0.0063\tiny±2e-4 & 0.0294\tiny±3e-3 & 0.0031\tiny±1e-4 & 0.0417\tiny±7e-3 & 0.0044\tiny±3e-4 & 0.0171\tiny±2e-3 & 0.0016\tiny±1e-4&0.0152\tiny±4e-4	&0.0014\tiny±1e-4 \\ \hline
        Random & 0.0620\tiny±7e-3 & 0.0062\tiny±3e-4 & 0.0296\tiny±5e-3 & 0.0032\tiny±3e-4 & 0.0416\tiny±4e-3 & 0.0045\tiny±5e-4 & 0.0175\tiny±7e-3 & 0.0017\tiny±2e-4&0.0159\tiny±5e-4	&0.0014\tiny±2e-4 \\ 
        Popular  & 0.0630\tiny±4e-3 & 0.0063\tiny±2e-4 & 0.0298\tiny±4e-3 & 0.0033\tiny±2e-4 & 0.0418\tiny±2e-3 & 0.0046\tiny±2e-4 & 0.0179\tiny±5e-3 & 0.0018\tiny±2e-4&0.0167\tiny±6e-4	&0.0015\tiny±1e-4 \\ 
        CoVis & 0.0641\tiny±3e-3 & 0.0064\tiny±2e-4 & 0.0305\tiny±2e-3 & 0.0036\tiny±2e-4 & 0.0423\tiny±4e-3 & 0.0051\tiny±3e-4 & 0.0182\tiny±2e-3 & 0.0021\tiny±1e-4&0.0164\tiny±3e-4	&0.0015\tiny±2e-4 \\ 
        PGA & 0.0672\tiny±2e-3 & 0.0069\tiny±3e-4 & 0.0298\tiny±1e-3 & 0.0036\tiny±3e-4 & 0.0434\tiny±5e-3 & 0.0050\tiny±3e-4 & 0.0191\tiny±2e-3 & 0.0020\tiny±2e-4 &0.0180\tiny±5e-4	&0.0017\tiny±3e-4\\ 
        AUSH & 0.0731\tiny±2e-3 & 0.0077\tiny±4e-4 & 0.0315\tiny±2e-3 & 0.0040\tiny±2e-4 & 0.0485\tiny±2e-3 & 0.0062\tiny±2e-4 & 0.0181\tiny±3e-3 & 0.0018\tiny±3e-4&0.0191\tiny±4e-4	&0.0018\tiny±1e-4 \\ \hline
        RevAdv & 0.0743\tiny±4e-3 & 0.0083\tiny±2e-4 & 0.0333\tiny±3e-3 & 0.0043\tiny±2e-4 & 0.0491\tiny±3e-3 & 0.0061\tiny±4e-4 & 0.0196\tiny±1e-3 & 0.0021\tiny±2e-4&0.0203\tiny±5e-4	&0.0019\tiny±2e-4 \\ 
        +SharpAP & \textbf{0.0783\tiny±2e-3} & \textbf{0.0087\tiny±2e-4} & \textbf{0.0434\tiny±2e-3} & \textbf{0.0051\tiny±2e-4} & \textbf{0.0642\tiny±2e-3} & \textbf{0.0074\tiny±4e-4} & \textbf{0.0229\tiny±2e-3} & \textbf{0.0025\tiny±1e-4} &\textbf{0.0234\tiny±5e-4}	&\textbf{0.0022\tiny±1e-4}\\ \hline
        RAPU & 0.0761\tiny±3e-3 & 0.0084\tiny±3e-4 & 0.0307\tiny±5e-3 & 0.0040\tiny±1e-4 & 0.0503\tiny±1e-3 & 0.0070\tiny±1e-4 & 0.0208\tiny±4e-3 & 0.0022\tiny±2e-4 &0.0218\tiny±3e-4	&0.0020\tiny±2e-4\\ 
        +SharpAP & \textbf{0.0808\tiny±1e-3} & \textbf{0.0090\tiny±2e-4} & \textbf{0.0421\tiny±2e-3} & \textbf{0.0046\tiny±2e-4} & \textbf{0.0621\tiny±2e-3} & \textbf{0.0076\tiny±2e-4} & \textbf{0.0225\tiny±4e-3} & \textbf{0.0027\tiny±1e-4} &\textbf{0.0239\tiny±7e-4}	&\textbf{0.0022\tiny±2e-4}\\ \hline
        DADA & 0.0759\tiny±4e-3 & 0.0085\tiny±2e-4 & 0.0344\tiny±3e-3 & 0.0042\tiny±2e-4 & 0.0539\tiny±4e-3 & 0.0068\tiny±2e-4 & 0.0217\tiny±3e-3 & 0.0024\tiny±2e-4 &0.0205\tiny±1e-3	&0.0018\tiny±1e-4\\ 
        +SharpAP & \textbf{0.0798\tiny±3e-3} & \textbf{0.0091\tiny±2e-4} & \textbf{0.0442\tiny±1e-3} & \textbf{0.0052\tiny±1e-4} & \textbf{0.0604\tiny±2e-3} & \textbf{0.0071\tiny±2e-4} & \textbf{0.0240\tiny±3e-3} & \textbf{0.0028\tiny±2e-4} &\textbf{0.0240\tiny±1e-3}	&\textbf{0.0023\tiny±2e-4}\\ \hline
        CLeaR & 0.0741\tiny±2e-3 & 0.0083\tiny±2e-4 & 0.0327\tiny±1e-3 & 0.0042\tiny±2e-4 & 0.0484\tiny±5e-3 & 0.0061\tiny±3e-4 & 0.0198\tiny±1e-3 & 0.0021\tiny±1e-4&0.0217\tiny±8e-4	&0.0020\tiny±2e-4 \\ 
        +SharpAP & \textbf{0.0780\tiny±1e-3} & \textbf{0.0086\tiny±2e-4} & \textbf{0.0439\tiny±2e-3} & \textbf{0.0052\tiny±1e-4} & \textbf{0.0650\tiny±3e-3} & \textbf{0.0075\tiny±1e-4} & \textbf{0.0234\tiny±2e-3} & \textbf{0.0027\tiny±2e-4} &\textbf{0.0232\tiny±5e-4}	&\textbf{0.0022\tiny±1e-4}\\ \hline
        DDSP & 0.0771\tiny±4e-3 & 0.0085\tiny±3e-4 & 0.0339\tiny±5e-3 & 0.0043\tiny±2e-4 & 0.0512\tiny±4e-3 & 0.0070\tiny±1e-4 & 0.0216\tiny±4e-3 & 0.0025\tiny±1e-4&0.0221\tiny±8e-4	&0.0021\tiny±1e-4 \\ 
        +SharpAP & \textbf{0.0803\tiny±4e-3} & \textbf{0.0090\tiny±2e-4} & \textbf{0.0448\tiny±4e-3} & \textbf{0.0054\tiny±3e-4} & \textbf{0.0663\tiny±6e-3} & \textbf{0.0079\tiny±2e-4} & \textbf{0.0242\tiny±3e-3} & \textbf{0.0028\tiny±2e-4}&\textbf{0.0244\tiny±7e-4}	&\textbf{0.0023\tiny±2e-4} \\
    \bottomrule
  \end{tabular}}
  \endgroup
\end{table*}
\begin{table*}[t]
  \centering
  \caption{A comparison of the performance of various attack methods on five representative victim RS, evaluated on \textbf{Gowalla} dataset. The best results are highlighted in bold.}
  \vspace{-0.1cm}
  \label{tab:Gowalla}
  \begingroup
  \scalebox{0.99}{\begin{tabular}{l| cc| cc| cc| cc| cc}
    \toprule
    \multirow{2}{*}{\textbf{Attacker}}
      & \multicolumn{2}{c}{\textbf{WRMF}}
      & \multicolumn{2}{c}{\textbf{BPR}}
      & \multicolumn{2}{c}{\textbf{LightGCN}}
      & \multicolumn{2}{c}{\textbf{SGL}} & \multicolumn{2}{c}{\textbf{SimGCL}} \\
    \cmidrule(lr){2-3} \cmidrule(lr){4-5} \cmidrule(lr){6-7} \cmidrule(lr){8-9}\cmidrule(lr){10-11}
      & H@20 & N@20
      & H@20 & N@20
      & H@20 & N@20
      & H@20 & N@20& H@20 & N@20 \\
    \midrule
   Clean & 0.0119\tiny±5e-4 & 0.0010\tiny±2e-4 & 0.0145\tiny±6e-4 & 0.0016\tiny±1e-4 & 0.0099\tiny±6e-4 & 0.0019\tiny±3e-4 & 0.0097\tiny±5e-4 & 0.0011\tiny±1e-4 & 0.0110\tiny±4e-4 & 0.0009\tiny±1e-4 \\ \hline
        Random & 0.0117\tiny±5e-4 & 0.0009\tiny±1e-4 & 0.0147\tiny±4e-4 & 0.0015\tiny±2e-4 & 0.0115\tiny±9e-4 & 0.0018\tiny±2e-4 & 0.0099\tiny±3e-4 & 0.0012\tiny±2e-4 & 0.0118\tiny±5e-4 & 0.0009\tiny±2e-4 \\ 
        Popular  & 0.0123\tiny±3e-4 & 0.0010\tiny±2e-4 & 0.0149\tiny±3e-4 & 0.0015\tiny±2e-4 & 0.0132\tiny±4e-4 & 0.0019\tiny±2e-4 & 0.0108\tiny±5e-4 & 0.0014\tiny±1e-4 & 0.0123\tiny±3e-4 & 0.0010\tiny±2e-4 \\ 
        CoVis & 0.0125\tiny±4e-4 & 0.0010\tiny±1e-4 & 0.0152\tiny±2e-4 & 0.0017\tiny±1e-4 & 0.0131\tiny±4e-4 & 0.0018\tiny±2e-4 & 0.0112\tiny±4e-4 & 0.0015\tiny±2e-4 & 0.0125\tiny±4e-4 & 0.0010\tiny±1e-4 \\ 
        PGA & 0.0128\tiny±2e-4 & 0.0011\tiny±2e-4 & 0.0156\tiny±5e-4 & 0.0018\tiny±2e-4 & 0.0157\tiny±6e-4 & 0.0021\tiny±3e-4 & 0.0124\tiny±3e-4 & 0.0016\tiny±3e-4 & 0.0132\tiny±5e-4 & 0.0011\tiny±2e-4 \\ 
        AUSH & 0.0131\tiny±3e-4 & 0.0012\tiny±3e-4 & 0.0151\tiny±6e-4 & 0.0017\tiny±2e-4 & 0.0162\tiny±5e-4 & 0.0022\tiny±1e-4 & 0.0107\tiny±2e-4 & 0.0014\tiny±2e-4 & 0.0139\tiny±3e-4 & 0.0012\tiny±2e-4 \\ \hline
        RevAdv & 0.0135\tiny±5e-4 & 0.0013\tiny±2e-4 & 0.0160\tiny±5e-4 & 0.0019\tiny±4e-4 & 0.0165\tiny±4e-4 & 0.0023\tiny±5e-4 & 0.0161\tiny±2e-4 & 0.0019\tiny±3e-4 & 0.0144\tiny±5e-4 & 0.0012\tiny±3e-4 \\ 
        +SharpAP & \textbf{0.0142\tiny±4e-4} & \textbf{0.0018\tiny±2e-4} & \textbf{0.0189\tiny±3e-4} & \textbf{0.0026\tiny±2e-4} & \textbf{0.0213\tiny±1e-4} & \textbf{0.0031\tiny±1e-4} & \textbf{0.0193\tiny±3e-4} & \textbf{0.0025\tiny±2e-4} & \textbf{0.0162\tiny±3e-4} & \textbf{0.0014\tiny±2e-4} \\ \hline
        RAPU & 0.0136\tiny±3e-4 & 0.0014\tiny±3e-4 & 0.0158\tiny±5e-4 & 0.0018\tiny±3e-4 & 0.0164\tiny±5e-4 & 0.0024\tiny±2e-4 & 0.0165\tiny±4e-4 & 0.0019\tiny±2e-4 & 0.0137\tiny±4e-4 & 0.0013\tiny±1e-4 \\ 
        +SharpAP & \textbf{0.0139\tiny±2e-4} & \textbf{0.0019\tiny±1e-4} & \textbf{0.0191\tiny±4e-4} & \textbf{0.0028\tiny±4e-4} & \textbf{0.0210\tiny±5e-4} & \textbf{0.0029\tiny±2e-4} & \textbf{0.0187\tiny±2e-4} & \textbf{0.0021\tiny±1e-4} & \textbf{0.0153\tiny±4e-4} & \textbf{0.0014\tiny±1e-4} \\ \hline
        DADA & 0.0141\tiny±5e-4 & 0.0022\tiny±2e-4 & 0.0173\tiny±3e-4 & 0.0021\tiny±3e-4 & 0.0198\tiny±4e-4 & 0.0027\tiny±3e-4 & 0.0170\tiny±5e-4 & 0.0021\tiny±3e-4 & 0.0140\tiny±3e-4 & 0.0012\tiny±2e-4 \\ 
        +SharpAP & \textbf{0.0163\tiny±4e-4} & \textbf{0.0024\tiny±2e-4} & \textbf{0.0199\tiny±5e-4} & \textbf{0.0030\tiny±2e-4} & \textbf{0.0214\tiny±2e-4} & \textbf{0.0033\tiny±4e-4} & \textbf{0.0185\tiny±4e-4} & \textbf{0.0024\tiny±2e-4} & \textbf{0.0162\tiny±2e-4} & \textbf{0.0014\tiny±1e-4} \\ \hline
        CLeaR & 0.0134\tiny±5e-4 & 0.0013\tiny±3e-4 & 0.0162\tiny±7e-4 & 0.0020\tiny±3e-4 & 0.0174\tiny±6e-4 & 0.0025\tiny±2e-4 & 0.0172\tiny±6e-4 & 0.0020\tiny±3e-4 & 0.0138\tiny±4e-4 & 0.0013\tiny±2e-4 \\ 
        +SharpAP & \textbf{0.0140\tiny±5e-4} & \textbf{0.0016\tiny±2e-4} & \textbf{0.0189\tiny±4e-4} & \textbf{0.0027\tiny±2e-4} & \textbf{0.0195\tiny±5e-4} & \textbf{0.0029\tiny±1e-4} & \textbf{0.0190\tiny±5e-4} & \textbf{0.0025\tiny±1e-4} & \textbf{0.0157\tiny±2e-4} & \textbf{0.0014\tiny±2e-4} \\ \hline
        DDSP & 0.0138\tiny±4e-4 & 0.0017\tiny±1e-4 & 0.0168\tiny±8e-4 & 0.0021\tiny±2e-4 & 0.0183\tiny±6e-4 & 0.0026\tiny±2e-4 & 0.0176\tiny±8e-4 & 0.0021\tiny±4e-4 & 0.0142\tiny±5e-4 & 0.0012\tiny±1e-4 \\ 
        +SharpAP & \textbf{0.0143\tiny±2e-4} & \textbf{0.0018\tiny±1e-4} & \textbf{0.0182\tiny±5e-4} & \textbf{0.0024\tiny±2e-4} & \textbf{0.0197\tiny±5e-4} & \textbf{0.0029\tiny±2e-4} & \textbf{0.0194\tiny±4e-4} & \textbf{0.0024\tiny±2e-4} & \textbf{0.0160\tiny±3e-4} & \textbf{0.0013\tiny±1e-4} \\
    \bottomrule
  \end{tabular}}
  \endgroup
\end{table*}
\subsubsection{Evaluation Metrics}

To quantitatively evaluate the effectiveness of full-user attacks, we follow \cite{dada_nips, incomplete_kdd21} and uniformly sample five items to form the target item set $V^t$. The attack’s success is assessed using the hit ratio (H@K) on real users $U^r$. A hit occurs if at least one target item appears in the Top-K recommendations for any user  $u \in U^r$. Additionally, we employ Normalized Discounted Cumulative Gain (NDCG) to assess the ranking quality of target items within the recommendation list. Specifically, we assign a relevance score of 1 to target items and 0 to all other items for every real user. 
Both metrics are computed for each real test user individually, and then averaged over all real test users.

In group attacks, items are divided into two categories according to their popularity within the attack group (i.e., $U_0$)~\cite{uba_www}. 
Specifically, we select the top 20\% items as popular and the bottom 80\% as unpopular.
Then, we randomly select five popular items and five unpopular items as the target item set $V^t$ from the popular category and the unpopular category, respectively.
Our goal is to attack the $U_0$ group while minimizing the impact on the $U_1$ group. 
To this end, we measure the difference in hit rates for the target items between the two groups, as follows:
\begin{equation}
\begin{gathered}
    \label{eq:dp}
    D@K=H_{U_0}@K-H_{U_1}@K,
\end{gathered}
\end{equation}
where $H_{U_0}@K$ and $H_{U_1}@K$ represent the hit rates of target items for two groups.
A higher $D@K$ value indicates stronger attack performance.

\subsubsection{Baseline Attack Methods}
To evaluate the effectiveness of \name, we select several competing baselines, including heuristic-based, neural network-driven, and gradient-based poisoning attacks.
The details are outlined as follows:
\begin{itemize}[leftmargin=*,topsep=0pt,parsep=0pt]
    \item \textbf{Clean}: Recommendation models are trained on an unattacked dataset.
    \item \textbf{Random Attack}\cite{lam2004shilling}: In this attack, fake users interact with target items as well as a set of randomly selected items.
    \item \textbf{Popular Attack}: According to previous work, \cite{mobasher2007toward,incomplete_kdd21}, each fake user selects 10\% of the most popular items and 90\% of randomly chosen items as the interacted items.
    \item \textbf{CoVis Attack}\cite{yang2017fake}:
    This baseline attack is tailored for association-rule-based recommendation models. In this method, the attacker identifies items for generating fake interactions and injects artificial co-visitations by solving a standard linear programming problem.
    \item \textbf{PGA Attack}\cite{li2016data}: This method targets matrix factorization-based recommendation models. It defines an attack objective and utilizes projected gradient ascent to update the poisoned user’s ratings in order to optimize this objective.
    \item \textbf{AUSH Attack}\cite{aush}: AUSH leverages Generative Adversarial Networks to tailor attacks on recommender systems based on budget and complex goals, such as targeting specific user groups.
    \item \textbf{UBA Attack}\cite{uba_www}: The attack emphasizes the importance of targeting specific users and frames the issue of varying attack difficulty across users using causal language, ultimately calculating the optimal allocation of fake user budgets.
    \item \textbf{RevAdv Attack}\cite{revAdv}: 
    RevAdv is a classic gradient-based attack method, which argues that previous methods calculate the gradient inaccurately. It further proposes a more accurate computation method, achieving state-of-the-art performance.
     \item \textbf{RAPU Attack}\cite{incomplete_kdd21}: 
     Compared to RevAdv, RAPU focuses on situations with incomplete training data and introduces a different attack objective function.
    \item \textbf{DADA Attack}\cite{dada_nips}: DADA introduces a difficulty- and diversity-aware attack objective that ensures easy-to-manipulate users from diverse groups receive more attention, enhancing the overall attack effectiveness.
    \item \textbf{CLeaR Attack}~\cite{CLeaR}: CLeaR employs a dual-objective strategy that promotes a smoother spectral value distribution to broaden user reachability while simultaneously optimizing a rank promotion objective to maximize the exposure of target items.
    \item \textbf{DDSP Attack}~\cite{DDSP}: DDSP employs a dual-promotion objective to simultaneously promote both target items and user-preferred items.
    
\end{itemize}

\subsubsection{Victim Models}
In this section, we carefully select representative RS models as victim models to evaluate the effectiveness of the attack.
\begin{itemize}[leftmargin=*,topsep=0pt,parsep=0pt]
    \item \textbf{WRMF}\cite{wrmf}: It is a foundational and representative factorization-based model for RS using implicit feedback.
    \item \textbf{BPR}\cite{bpr2012}: It is a classic collaborative filtering method that designs a pairwise ranking loss function, which is widely applied in recommendations based on implicit feedback.
    \item \textbf{LightGCN}\cite{lightgcn_he}: It is a state-of-the-art GCN-based method that eliminates feature transformation and nonlinear activation functions in the GCN aggregator.
    \item \textbf{SGL}\cite{sgl}: Compared to LightGCN, SGL further enhances recommendation performance by utilizing self-supervised graph learning.
    \item \textbf{SimGCL}\cite{simgcl}: SimGCL proposes simple graph contrastive learning and noise-based augmentation for graph recommendation.
\end{itemize}

\begin{table*}[t]
  \centering
  \caption{A comparison of the performance of various attack methods on five representative victim RS, evaluated on \textbf{Amazon-book} dataset. The best results are highlighted in bold.}
  \label{tab:book}
  \begingroup
  \scalebox{0.99}{\begin{tabular}{l| cc| cc| cc| cc| cc}
    \toprule
    \multirow{2}{*}{\textbf{Attacker}}
      & \multicolumn{2}{c}{\textbf{WRMF}}
      & \multicolumn{2}{c}{\textbf{BPR}}
      & \multicolumn{2}{c}{\textbf{LightGCN}}
      & \multicolumn{2}{c}{\textbf{SGL}} 
      & \multicolumn{2}{c}{\textbf{SimGCL}} \\
    \cmidrule(lr){2-3} \cmidrule(lr){4-5} \cmidrule(lr){6-7} \cmidrule(lr){8-9}\cmidrule(lr){10-11}
      & H@20(\%) & N@20(\%)
      & H@20(\%) & N@20(\%)
      & H@20(\%) & N@20(\%)
      & H@20(\%) & N@20(\%)
      & H@20(\%) & N@20(\%)\\
    \midrule
   Clean & 0.1615\tiny±2e-3 & 0.0134\tiny±3e-4 & 0.1164\tiny±4e-3 & 0.0101\tiny±9e-4 & 0.1577\tiny±3e-3 & 0.0143\tiny±9e-4 & 0.2090\tiny±1e-3 & 0.0198\tiny±1e-3&0.1623\tiny±4e-3 &0.0158\tiny±2e-4 \\ \hline
        Random & 0.1632\tiny±5e-3 & 0.0138\tiny±1e-3 & 0.1166\tiny±3e-3 & 0.0100\tiny±1e-3 & 0.1592\tiny±3e-3 & 0.0148\tiny±2e-4 & 0.2104\tiny±4e-3 & 0.0199\tiny±6e-4 &0.1744\tiny±5e-3	&0.0165\tiny±4e-4\\ 
        Popular  & 0.1750\tiny±3e-3 & 0.0142\tiny±5e-4 & 0.1190\tiny±1e-3 & 0.0108\tiny±5e-4 & 0.1600\tiny±4e-3 & 0.0151\tiny±5e-4 & 0.2110\tiny±2e-3 & 0.0207\tiny±2e-4 &0.1982\tiny±3e-3	&0.0183\tiny±4e-4\\ 
        CoVis & 0.1741\tiny±4e-3 & 0.0140\tiny±7e-4 & 0.1183\tiny±5e-3 & 0.0105\tiny±3e-4 & 0.1598\tiny±5e-3 & 0.0147\tiny±4e-4 & 0.2106\tiny±6e-3 & 0.0201\tiny±3e-4 &0.1710\tiny±4e-3	&0.0162\tiny±5e-4\\ 
        PGA & 0.1793\tiny±2e-3 & 0.0151\tiny±9e-4 & 0.1222\tiny±5e-3 & 0.0113\tiny±6e-4 & 0.1663\tiny±7e-3 & 0.0156\tiny±8e-4 & 0.2199\tiny±2e-3 & 0.0210\tiny±5e-4 &0.2174\tiny±4e-3	&0.0209\tiny±2e-4\\ 
        AUSH & 0.1752\tiny±5e-3 & 0.0145\tiny±6e-4 & 0.1201\tiny±4e-3 & 0.0110\tiny±3e-4 & 0.1667\tiny±6e-3 & 0.0159\tiny±1e-3 & 0.2137\tiny±3e-3 & 0.0208\tiny±4e-4 &0.1962\tiny±2e-3	&0.0191\tiny±3e-4\\ \hline
        RevAdv & 0.1820\tiny±2e-3 & 0.0150\tiny±5e-4 & 0.1254\tiny±3e-3 & 0.0114\tiny±7e-4 & 0.1702\tiny±2e-3 & 0.0160\tiny±7e-4 & 0.2198\tiny±1e-3 & 0.0211\tiny±2e-4 &0.2144\tiny±4e-3	&0.0205\tiny±4e-4\\ 
        +SharpAP & \textbf{0.1932\tiny±1e-3} & \textbf{0.0168\tiny±2e-4} & \textbf{0.1402\tiny±2e-3} & \textbf{0.0126\tiny±5e-4} & \textbf{0.1882\tiny±1e-3} & \textbf{0.0171\tiny±5e-4} & \textbf{0.2403\tiny±2e-3} & \textbf{0.0227\tiny±4e-4} &\textbf{0.2352\tiny±4e-3}	&\textbf{0.0228\tiny±2e-4}\\ \hline
        RAPU & 0.1843\tiny±3e-3 & 0.0156\tiny±6e-4 & 0.1293\tiny±2e-3 & 0.0119\tiny±6e-4 & 0.1780\tiny±3e-3 & 0.0163\tiny±6e-4 & 0.2204\tiny±3e-3 & 0.0219\tiny±8e-4 &0.2013\tiny±5e-3	&0.0194\tiny±3e-4\\ 
        +SharpAP & \textbf{0.1944\tiny±2e-3} & \textbf{0.0171\tiny±4e-4} & \textbf{0.1440\tiny±1e-3} & \textbf{0.0129\tiny±5e-4} & \textbf{0.1914\tiny±2e-3} & \textbf{0.0178\tiny±6e-4} & \textbf{0.2366\tiny±1e-3} & \textbf{0.0225\tiny±5e-4} &\textbf{0.2454\tiny±2e-3}	&\textbf{0.0233\tiny±2e-4}\\ \hline
        DADA & 0.1905\tiny±5e-3 & 0.0160\tiny±6e-4 & 0.1337\tiny±7e-3 & 0.0123\tiny±1e-3 & 0.1808\tiny±4e-3 & 0.0166\tiny±4e-4 & 0.2189\tiny±2e-3 & 0.0210\tiny±4e-4 &0.1970\tiny±5e-3	&0.0182\tiny±4e-4\\ 
        +SharpAP & \textbf{0.2029\tiny±4e-3} & \textbf{0.0178\tiny±5e-4} & \textbf{0.1497\tiny±2e-3} & \textbf{0.0134\tiny±7e-4} & \textbf{0.2019\tiny±3e-3} & \textbf{0.0181\tiny±8e-4} & \textbf{0.2378\tiny±1e-3} & \textbf{0.0228\tiny±3e-4} &\textbf{0.2268\tiny±3e-3}	&\textbf{0.0215\tiny±1e-4}\\ \hline
        CLeaR & 0.1900\tiny±6e-3 & 0.0158\tiny±7e-4 & 0.1342\tiny±6e-3 & 0.0128\tiny±8e-4 & 0.1812\tiny±5e-3 & 0.0170\tiny±7e-4 & 0.2192\tiny±3e-3 & 0.0218\tiny±9e-4 &0.2107\tiny±2e-3	&0.0204\tiny±5e-4\\ 
        +SharpAP & \textbf{0.2042\tiny±4e-3} & \textbf{0.0180\tiny±4e-4} & \textbf{0.1503\tiny±3e-3} & \textbf{0.0137\tiny±6e-4} & \textbf{0.2024\tiny±4e-3} & \textbf{0.0184\tiny±5e-4} & \textbf{0.2400\tiny±2e-3} & \textbf{0.0229\tiny±5e-4} &\textbf{0.2462\tiny±3e-3}	&\textbf{0.0227\tiny±4e-4}\\ \hline
        DDSP & 0.1913\tiny±2e-3 & 0.0159\tiny±5e-4 & 0.1350\tiny±5e-3 & 0.0129\tiny±5e-4 & 0.1809\tiny±6e-3 & 0.0164\tiny±2e-3 & 0.2203\tiny±5e-3 & 0.0219\tiny±1e-3&0.2245\tiny±4e-3	&0.0212\tiny±4e-4 \\ 
        +SharpAP & \textbf{0.2107\tiny±1e-3} & \textbf{0.0189\tiny±2e-4} & \textbf{0.1594\tiny±4e-3} & \textbf{0.0141\tiny±3e-4} & \textbf{0.2073\tiny±5e-3} & \textbf{0.0190\tiny±2e-3} & \textbf{0.2481\tiny±4e-3} & \textbf{0.0237\tiny±8e-4} &\textbf{0.2530\tiny±2e-3}	&\textbf{0.0241\tiny±3e-4}\\ 
    \bottomrule
  \end{tabular}}
  \endgroup
\end{table*}
\begin{table*}[t]
  \centering
  \caption{A comparison of the performance of various attack methods on five representative victim RS, evaluated on \textbf{MovieLens-1M } dataset. \textbf{LightGCN} is used as the surrogate model. The best results are highlighted in bold.}
  \vspace{-0.1cm}
  \label{tab:ml-1m-gcn}
  \begingroup
  \renewcommand{\arraystretch}{1.2} 
  \scalebox{0.93}{\begin{tabular}{l| cc| cc| cc| cc| cc}
    \toprule
    \multirow{2}{*}{\textbf{Attacker}}
      & \multicolumn{2}{c}{\textbf{WRMF}}
      & \multicolumn{2}{c}{\textbf{BPR}}
      & \multicolumn{2}{c}{\textbf{LightGCN}}
      & \multicolumn{2}{c}{\textbf{SGL}}& \multicolumn{2}{c}{\textbf{SimGCL}} \\
    \cmidrule(lr){2-3} \cmidrule(lr){4-5} \cmidrule(lr){6-7} \cmidrule(lr){8-9}\cmidrule(lr){10-11}
      & H@20 & N@20
      & H@20 & N@20
      & H@20 & N@20
      & H@20 & N@20
      & H@20 & N@20\\
    \midrule
        Clean & 0.0614\tiny±3e-3 & 0.0063\tiny±3e-4 & 0.0294\tiny±1e-3 & 0.0031\tiny±4e-4 & 0.0417\tiny±5e-3 & 0.0044\tiny±2e-4 & 0.0171\tiny±2e-3 & 0.0016\tiny±3e-4&0.0242\tiny±3e-3	&0.0026\tiny±2e-4 \\ \hline
        RevAdv & 0.0703\tiny±2e-3 & 0.0076\tiny±5e-4 & 0.0311\tiny±3e-3 & 0.0040\tiny±5e-4 & 0.0532\tiny±1e-3 & 0.0068\tiny±3e-4 & 0.0182\tiny±2e-3 & 0.0020\tiny±4e-4 &0.0290\tiny±2e-3	&0.0031\tiny±2e-4\\ 
        +SharpAP & \textbf{0.0783\tiny±1e-3} & \textbf{0.0087\tiny±2e-4} & \textbf{0.0434\tiny±4e-3} & \textbf{0.0051\tiny±1e-4} & \textbf{0.0608\tiny±2e-3} & \textbf{0.0071\tiny±2e-4} & \textbf{0.0214\tiny±3e-3} & \textbf{0.0024\tiny±2e-4}&\textbf{0.0317\tiny±6e-4}	&\textbf{0.0033\tiny±1e-4} \\ \hline
        RAPU & 0.0721\tiny±2e-3 & 0.0077\tiny±4e-4 & 0.0301\tiny±4e-3 & 0.0035\tiny±3e-4 & 0.0520\tiny±2e-3 & 0.0066\tiny±2e-4 & 0.0189\tiny±3e-3 & 0.0023\tiny±3e-4&0.0283\tiny±1e-3	&0.0029\tiny±1e-4 \\ 
        +SharpAP & \textbf{0.0778\tiny±1e-3} & \textbf{0.0086\tiny±1e-4} & \textbf{0.0408\tiny±3e-3} & \textbf{0.0044\tiny±2e-4} & \textbf{0.0593\tiny±2e-3} & \textbf{0.0072\tiny±1e-4} & \textbf{0.0230\tiny±3e-3} & \textbf{0.0027\tiny±2e-4}&\textbf{0.0309\tiny±2e-3}	&\textbf{0.0032\tiny±1e-4} \\ \hline
        DADA & 0.0733\tiny±2e-3 & 0.0080\tiny±3e-4 & 0.0336\tiny±2e-3 & 0.0041\tiny±1e-4 & 0.0544\tiny±3e-3 & 0.0069\tiny±3e-4 & 0.0200\tiny±2e-3 & 0.0022\tiny±1e-4&0.0296\tiny±2e-3	&0.0031\tiny±1e-4 \\ 
        +SharpAP & \textbf{0.0764\tiny±2e-3} & \textbf{0.0083\tiny±2e-4} & \textbf{0.0401\tiny±1e-3} & \textbf{0.0048\tiny±1e-4} & \textbf{0.0582\tiny±1e-3} & \textbf{0.0070\tiny±2e-4} & \textbf{0.0252\tiny±2e-3} & \textbf{0.0026\tiny±2e-4} &\textbf{0.0322\tiny±8e-4}	&\textbf{0.0034\tiny±2e-4}\\ \hline
        CLeaR & 0.0727\tiny±2e-3 & 0.0079\tiny±3e-4 & 0.0320\tiny±1e-3 & 0.0041\tiny±1e-4 & 0.0493\tiny±3e-3 & 0.0064\tiny±3e-4 & 0.0183\tiny±2e-3 & 0.0020\tiny±2e-4&0.0277\tiny±2e-3	&0.0029\tiny±1e-4 \\ 
        +SharpAP & \textbf{0.0762\tiny±1e-3} & \textbf{0.0084\tiny±2e-4} & \textbf{0.0427\tiny±2e-3} & \textbf{0.0050\tiny±2e-4} & \textbf{0.0611\tiny±2e-3} & \textbf{0.0072\tiny±2e-4} & \textbf{0.0224\tiny±2e-3} & \textbf{0.0025\tiny±3e-4}&\textbf{0.0308\tiny±1e-3}	&\textbf{0.0032\tiny±2e-4} \\ \hline
        DDSP & 0.0742\tiny±4e-3 & 0.0081\tiny±2e-4 & 0.0330\tiny±3e-3 & 0.0041\tiny±3e-4 & 0.0538\tiny±3e-3 & 0.0069\tiny±2e-4 & 0.0214\tiny±3e-3 & 0.0025\tiny±3e-4&0.0291\tiny±1e-3	&0.0031\tiny±2e-4 \\ 
        +SharpAP & \textbf{0.0788\tiny±2e-3} & \textbf{0.0087\tiny±1e-4} & \textbf{0.0425\tiny±2e-3} & \textbf{0.0051\tiny±2e-4} & \textbf{0.0624\tiny±3e-3} & \textbf{0.0075\tiny±1e-4} & \textbf{0.0235\tiny±2e-3} & \textbf{0.0026\tiny±2e-4}&\textbf{0.0320\tiny±2e-3}	&\textbf{0.0034\tiny±1e-4} \\ 
    \bottomrule
  \end{tabular}}
  \endgroup
\end{table*}
\subsubsection{Implement Details}
For all attacks, without special explanation, we adopt the following attack settings.
The percentage of fake users is fixed to 1\% (i.e., $\delta=1\%$).
The maximum number of items each fake user can interact with, denoted as $N$, is set to the average number of items interacted with by real users in the dataset, as shown in Table.~\ref{tab:dataset_stats}.
Following~\cite{revAdv,dada_nips,fang2020influence}, we employ the representative
factorization-based model WRMF~\cite{wrmf} as the surrogate model and set the learning rate $\lambda_2$ in Alg.\ref{alg:algorithm} to 1.
We also use a more complex model LightGCN~\cite{lightgcn_he} as the surrogate in Table~\ref{tab:ml-1m-gcn}.
For our method \name, we search the perturbation radius $\epsilon$ in Eq.\eqref{eq:bi_31} within \{0.005, 0.02, 0.05, 0.1, 0.2\}.
All experiments are conducted on an NVIDIA A40 GPU with Pytorch-2.1.2.
The reported results are averaged over ten runs with different random strategies (target item sampling, fake user initialization, and optimizer seeds).

\begin{table*}[t]
\centering
    \caption{A comparison of the performance of full-user methods on four representative victim RS, evaluated on \textbf{Movielens-1M} dataset. The target items are selected from the popular and unpopular groups, respectively. The best results are highlighted in bold.}
    \label{tab:pop-ml1m}
\scalebox{0.85}{
\begin{tabular}{c|cccccccc|cccccccc}
\toprule
\multicolumn{1}{c}{\multirow{3}{*}{\textbf{Attacker}}} & \multicolumn{8}{c}{\textbf{Popular item}} & \multicolumn{8}{c}{\textbf{Unpopular item}} \\ \cmidrule(l){2-17} 
\multicolumn{1}{c}{} & \multicolumn{2}{c}{WRMF} & \multicolumn{2}{c}{BPR} & \multicolumn{2}{c}{LightGCN} & \multicolumn{2}{c}{SGL} & \multicolumn{2}{c}{WRMF} & \multicolumn{2}{c}{BPR} & \multicolumn{2}{c}{LightGCN} & \multicolumn{2}{c}{SGL}\\ 
\multicolumn{1}{c}{} & H@20 & N@20 & H@20 & N@20 & H@20 & N@20 & H@20 & N@20& H@20 & N@20& H@20 & N@20 & H@20 & N@20 & H@20 & N@20 \\ \cmidrule(r){1-17}
Clean & 0.0871 & 0.0099 & 0.0467 & 0.0046 & 0.0813 & 0.0086 & 0.0742 & 0.0078 & 0.0026 & 0.0002 & 0.0029 & 0.0003 & 0.0046 & 0.0004 & 0.0035 & 0.0003 \\ \hline
        RevAdv & 0.1107 & 0.0108 & 0.0551 & 0.0058 & 0.1027 & 0.0099 & 0.1035 & 0.0117 & 0.0049 & 0.0006 & 0.0041 & 0.0006 & 0.0078 & 0.0007 & 0.0067 & 0.0005 \\ 
        +SharpAP & \textbf{0.1224} & \textbf{0.0114} & \textbf{0.0762} & \textbf{0.0079} & \textbf{0.1326} & \textbf{0.0141} & \textbf{0.1284} & \textbf{0.0130} & \textbf{0.0053} & \textbf{0.0007} & \textbf{0.0048} & \textbf{0.0007} & \textbf{0.0088} & \textbf{0.0009} & \textbf{0.0079} & \textbf{0.0006} \\ \hline
        RAPU & 0.1202 & 0.0110 & 0.0594 & 0.0062 & 0.1164 & 0.0128 & 0.1090 & 0.0121 & 0.0045 & 0.0004 & 0.0040 & 0.0005 & 0.0073 & 0.0008 & 0.0063 & 0.0005 \\ 
        +SharpAP & \textbf{0.1298} & \textbf{0.0133} & \textbf{0.0787} & \textbf{0.0081} & \textbf{0.1383} & \textbf{0.0150} & \textbf{0.1311} & \textbf{0.0139} & \textbf{0.0051} & \textbf{0.0005} & \textbf{0.0052} & \textbf{0.0006} & \textbf{0.0086} & \textbf{0.0009} & \textbf{0.0081} & \textbf{0.0007} \\ \hline
        DADA & 0.1310 & 0.0128 & 0.0604 & 0.0063 & 0.1105 & 0.0119 & 0.1106 & 0.0124 & 0.0050 & 0.0005 & 0.0047 & 0.0004 & 0.0074 & 0.0007 & 0.0069 & 0.0006 \\ 
        +SharpAP & \textbf{0.1423} & \textbf{0.0151} & \textbf{0.0831} & \textbf{0.0084} & \textbf{0.1351} & \textbf{0.0142} & \textbf{0.1287} & \textbf{0.0132} & \textbf{0.0062} & \textbf{0.0007} & \textbf{0.0058} & \textbf{0.0005} & \textbf{0.0084} & \textbf{0.0008} & \textbf{0.0078} & \textbf{0.0008} \\ \hline
        CLeaR & 0.1302 & 0.0124 & 0.0611 & 0.0060 & 0.1047 & 0.0118 & 0.1124 & 0.0128 & 0.0048 & 0.0004 & 0.0048 & 0.0004 & 0.0076 & 0.0006 & 0.0067 & 0.0007 \\ 
        +SharpAP & \textbf{0.1399} & \textbf{0.0147} & \textbf{0.0752} & \textbf{0.0078} & \textbf{0.1266} & \textbf{0.0134} & \textbf{0.1306} & \textbf{0.0138} & \textbf{0.0053} & \textbf{0.0005} & \textbf{0.0055} & \textbf{0.0005} & \textbf{0.0088} & \textbf{0.0007} & \textbf{0.0076} & \textbf{0.0008} \\ \hline
        DDSP & 0.1374 & 0.0138 & 0.0632 & 0.0065 & 0.1180 & 0.0123 & 0.1202 & 0.0130 & 0.0050 & 0.0005 & 0.0051 & 0.0005 & 0.0072 & 0.0007 & 0.0080 & 0.0007 \\ 
        +SharpAP & \textbf{0.1433} & \textbf{0.0159} & \textbf{0.0840} & \textbf{0.0085} & \textbf{0.1372} & \textbf{0.0150} & \textbf{0.1327} & \textbf{0.0142} & \textbf{0.0058} & \textbf{0.0006} & \textbf{0.0062} & \textbf{0.0006} & \textbf{0.0078} & \textbf{0.0008} & \textbf{0.0085} & \textbf{0.0008} \\ 
 \bottomrule
\end{tabular}
}
\end{table*}
\begin{table*}[t]

\centering
    \caption{A comparison of the performance of group attack methods on three representative victim RS, evaluated on \textbf{Movielens-1M} dataset. The best results are highlighted in bold.}
    \label{tab:group}
\scalebox{0.84}{
\begin{tabular}{c|cccccccc|cccccccc}
\toprule
\multicolumn{1}{c}{\multirow{3}{*}{\textbf{Attacker}}} & \multicolumn{8}{c}{\textbf{Popular item}} & \multicolumn{8}{c}{\textbf{Unpopular item}} \\ \cmidrule(l){2-17} 
\multicolumn{1}{c}{} & \multicolumn{2}{c}{WRMF} & \multicolumn{2}{c}{BPR} & \multicolumn{2}{c}{LightGCN} & \multicolumn{2}{c}{SGL} & \multicolumn{2}{c}{WRMF} & \multicolumn{2}{c}{BPR} & \multicolumn{2}{c}{LightGCN} & \multicolumn{2}{c}{SGL}\\ 
\multicolumn{1}{c}{} & D@10 & D@20 & D@10 & D@20 & D@10 & D@20 & D@10 & D@20& D@10 & D@20& D@10 & D@20 & D@10 & D@20 & D@10 & D@20 \\ \cmidrule(r){1-17}
Clean & 0.0269 & 0.0322 & 0.0112 & 0.0177 & 0.0351 & 0.0578&0.0313&0.0495 & 0.0002 & -0.0046 & -0.0009 & -0.0028 & 0.0000 & 0.0009 &0.0001&0.0005\\ \hline
AUSH & 0.0344 & 0.0461 & 0.0138 & 0.0203 & 0.0425 & 0.0712&0.0387&0.0531 & -0.0003 & -0.005 & -0.0006 & -0.0017 & 0.0008 & 0.0012 &0.0003&0.0009\\
UBA & 0.0435 & 0.0519 & 0.0176 & 0.0324 & 0.0472 & 0.0800&0.0424&0.069 & 0.0026 & 0.0019 & 0.0034 & 0.0026 & 0.0043 & 0.0062 &0.0035&0.0058\\ \hline
RevAdv & 0.0412 & 0.0501 & 0.0153 & 0.0231 & 0.0498 & 0.0816&0.0412&0.0633 & 0.0014 & 0.0005 & 0.0025 & 0.0012 & 0.0039 & 0.0056 &0.0026&0.0041\\
+SharpAP & \textbf{0.0537} & \textbf{0.062} & \textbf{0.0362} & \textbf{0.0578} & \textbf{0.0626} & \textbf{0.1082} &\bfseries0.057&\bfseries0.0897& \textbf{0.0105} & \textbf{0.0062} & \textbf{0.0068} & \textbf{0.0031} & \textbf{0.0075} & \textbf{0.0131} &\bfseries0.0058&\bfseries0.0103\\ \hline
RAPU & 0.0395 & 0.0486 & 0.0164 & 0.0269 & 0.0511 & 0.0937&0.0445&0.0782 & 0.0012 & 0.0004 & 0.0031 & 0.0028 & 0.0042 & 0.0060 &0.0034&0.0055\\ 
+SharpAP & \textbf{0.0485} & \textbf{0.0597} & \textbf{0.0337} & \textbf{0.0546} & \textbf{0.0723} & \textbf{0.1108}&\bfseries0.0671&\bfseries0.1015 & \textbf{0.0113} & \textbf{0.0076} & \textbf{0.0070} & \textbf{0.0035} & \textbf{0.0084} & \textbf{0.0153} &\bfseries0.0061&\bfseries0.0093\\ \hline
DADA & 0.0483 & 0.0551 & 0.0266 & 0.0347 & 0.0481 & 0.0807&0.0386&0.0558 & 0.0027 & 0.0012 & 0.0043 & 0.0030 & 0.0054 & 0.0088 &0.004&0.0067\\ 
+SharpAP & \textbf{0.0631} & \textbf{0.0727} & \textbf{0.0421} & \textbf{0.0634} & \textbf{0.0712} & \textbf{0.1085}&\bfseries0.0614&\bfseries0.0972 & \textbf{0.0095} & \textbf{0.0068} & \textbf{0.0074} & \textbf{0.0042} & \textbf{0.0096} & \textbf{0.0146} &\bfseries0.0063&\bfseries0.0105\\  \hline
CLeaR & 0.0420 & 0.0514 & 0.0221 & 0.0305 & 0.0494 & 0.0828&0.0426&0.0758 & 0.0024 & 0.0009 & 0.0038 & 0.0029 & 0.0046 & 0.0071 &0.0042&0.0070\\  
+SharpAP & \textbf{0.0588} & \textbf{0.0690} & \textbf{0.0417} & \textbf{0.0603} & \textbf{0.0736} & \textbf{0.1093}&\bfseries0.0677&\bfseries0.1103 & \textbf{0.0086} & \textbf{0.0052} & \textbf{0.0069} & \textbf{0.0038} & \textbf{0.0090} & \textbf{0.0122} &\bfseries0.0060&\bfseries0.0101\\ \hline
DDSP & 0.0497 & 0.0560 & 0.0277 & 0.0338 & 0.0460 & 0.0775&0.0451&0.0776 & 0.0022 & 0.0007 & 0.0031 & 0.0024 & 0.0040 & 0.0069 &0.0044&0.0073\\  
+SharpAP & \textbf{0.0612} & \textbf{0.0709} & \textbf{0.0398} & \textbf{0.0587} & \textbf{0.0680} & \textbf{0.0993}&\bfseries0.0692&\bfseries0.1286 & \textbf{0.0083} & \textbf{0.0046} & \textbf{0.0059} & \textbf{0.0036} & \textbf{0.0030} & \textbf{0.0108} &\bfseries0.0065&\bfseries0.0122\\ 
\bottomrule
\end{tabular}
}
\vspace{-0.3cm}
\end{table*}

\subsection{Overall Performance}
\subsubsection{Full-user Attack Performance}
We report the full-user attack performance of all methods under different Top-K settings from Tables~\ref{tab:m1m},~\ref{tab:Gowalla},~\ref{tab:book},~\ref{tab:ml-1m-gcn}, and~\ref{tab:pop-ml1m}, and have the following observations:
\begin{itemize}[leftmargin=*,topsep=0pt,parsep=0pt]
    \item All attack methods increase the exposure probability of target items. However, heuristic-based methods (e.g., Random, Popular, and CoVis) do not achieve impressive performance because they do not directly optimize the attack objective. Instead, they manually select fake user interaction records to enhance the co-occurrence between target items and other items.
    \item Although some gradient-based methods achieve good attack performance by directly optimizing the attack objective, they exhibit overfitting.
    Our \name~mitigates the overfitting issue and improves performance even on models beyond the surrogate model.
    Note that when using RevAdv as the base model, SharpAP boosts HR@20 on BPR and WRMF by 48\% and 27\%, respectively. We hypothesize that the pairwise objective intrinsically creates a loss landscape with higher curvature (sharper local minima) and greater sensitivity to input perturbations compared to WRMF.
    \item As shown in Table~\ref{tab:ml-1m-gcn}, when using LightGCN as the surrogate without SharpAP, the attack performs exceptionally well against a LightGCN victim, which is intuitive. However, transferability to other victims (e.g., WRMF, BPR) degrades compared to using the WRMF surrogate. We attribute this to LightGCN's complex graph aggregation mechanism, which causes the generated poisoned data to overfit specific high-order graph artifacts.
    \item Table~\ref{tab:pop-ml1m} presents a comparison of attack performance when the target items are selected from popular (Top 20\%) and unpopular (Bottom 80\%) items, respectively. The H@20/N@20 for unpopular items is consistently lower. Attacking unpopular items is harder than attacking popular items across all methods. This is because unpopular items have sparse interaction data, resulting in less stable embeddings that are located farther from real users in the latent space~\cite{revAdv,aush}.
    \item 
    Finally, as demonstrated in these Tables, our proposed \name~consistently achieves superior attack performance compared to all baselines across three benchmark datasets under varying victim models.
    The core advantage of \name~lies in its ability to enhance poisoning robustness through poisoning the worst-case model instead of a fixed surrogate model. 
    This ability effectively alleviates the overfitting of poisoned data to the surrogate model, thereby enhancing the transferability of the attack.
    
\end{itemize}

\subsubsection{Group Attack Performance}
As mentioned in Sec.\ref{sub:full_and_group}, we can use the group attack objective to assist \name~in performing group attacks.
Here, we compare the group attack performance of different methods in Table.~\ref{tab:group}. 
Since the MovieLens-1M dataset includes user profile information, we adopt it as the evaluation dataset and divide users into two groups based on gender: a female group ($|U_0| = 1{,}705$) and a male group ($|U_1| = 4{,}309$). Users with missing gender information are excluded.
Our findings are as follows:
\begin{itemize}[leftmargin=*,topsep=0pt,parsep=0pt]
    \item All methods perform worse on unpopular target items compared to popular ones, meaning that cold items are harder to promote. This is likely because cold items are farther from real users in the latent space, making the attack more challenging.
    Similar findings have also been reported in \cite{revAdv,aush}.
    \item Our proposed \textit{SharpAP} consistently outperforms all baselines on popular and unpopular items. 
    Specifically, \textit{SharpAP} improves RevAdv by 30\% in terms of D@10 for popular items and 600\% for unpopular items in WRMF, respectively.
    \item On all three backbones (RevAdv, RAPU, and DADA) equipped with our method, we observed a significant performance advantage, even on unpopular items, which validates the effectiveness of our method.
\end{itemize}

\begin{table}[t]
\centering
\caption{All-user attack performance under three defense models, evaluated on Movielens-1M dataset using H@10.}
    \label{tab:defense1}
\scalebox{0.75}{\begin{tabular}{c|cccc|cccc} \hline
\multirow{2}{*}{Attacker} & \multicolumn{4}{c}{BPR} & \multicolumn{4}{c}{LightGCN} \\ 
 & Attack & PCA & APT&PamaCF & Attack & PCA & APT &PamaCF\\ \hline
RevAdv & 0.0125 & 0.0121 & 0.0113 & 0.0119&   0.0186 & 0.0184 & 0.0170&0.0167 \\
+SharpAP & 0.0181 & 0.0162 & 0.0154 &0.0146 &  0.0259 & 0.0241 & 0.0253&0.0216 \\\hline
RAPU & 0.0121 & 0.0118 & 0.0115 &0.0117&  0.0191 & 0.0186 & 0.0179&0.0160 \\
+SharpAP & 0.0177 & 0.0173 & 0.0159 & 0.0147&  0.0261 & 0.0255 & 0.0247&0.0223 \\\hline
DADA & 0.0138 & 0.0120 & 0.0124 & 0.0118&  0.0211 & 0.0198 & 0.0195&0.0188 \\
+SharpAP & 0.0194 & 0.0187 & 0.0176 &0.0173 &  0.0248 & 0.0233 & 0.0219&0.0204 \\\hline
\end{tabular}}
\end{table}

\begin{table}[t]
\centering
\caption{Group attack performance under three defense models, evaluated on Movielens-1M dataset using D@10.}
    \label{tab:defense2}
\scalebox{0.75}{\begin{tabular}{c|cccc|cccc} \hline
\multirow{2}{*}{Attacker} & \multicolumn{4}{c}{BPR} & \multicolumn{4}{c}{LightGCN} \\ 
 & Attack & PCA & APT&PamaCF & Attack & PCA & APT &PamaCF \\\hline
RevAdv & 0.0153 & 0.0136 & 0.0128&0.0123   & 0.0498 & 0.0463 & 0.0458&0.0445  \\
+SharpAP & 0.0362 & 0.0254 & 0.0279 &0.0234     & 0.0626 & 0.0591 & 0.0574&0.0522  \\\hline
RAPU & 0.0164 & 0.0143 & 0.0136  &0.0128        & 0.0511 & 0.0416 & 0.0453&0.0390  \\
+SharpAP & 0.0337 & 0.0271 & 0.0224 &0.0207     & 0.0723 & 0.0665 & 0.0582&0.0564  \\\hline
DADA & 0.0266 & 0.0253 & 0.0248 &0.0211         & 0.0481 & 0.0440 & 0.0421&0.0413  \\
+SharpAP & 0.0421 & 0.0322 & 0.0345 &0.0304     & 0.0712 & 0.0593 & 0.0574&0.0558  \\\hline
\end{tabular}}
\vspace{-0.4cm}
\end{table}

\begin{figure}[t]
  \begin{center}
    \includegraphics[scale=0.43]{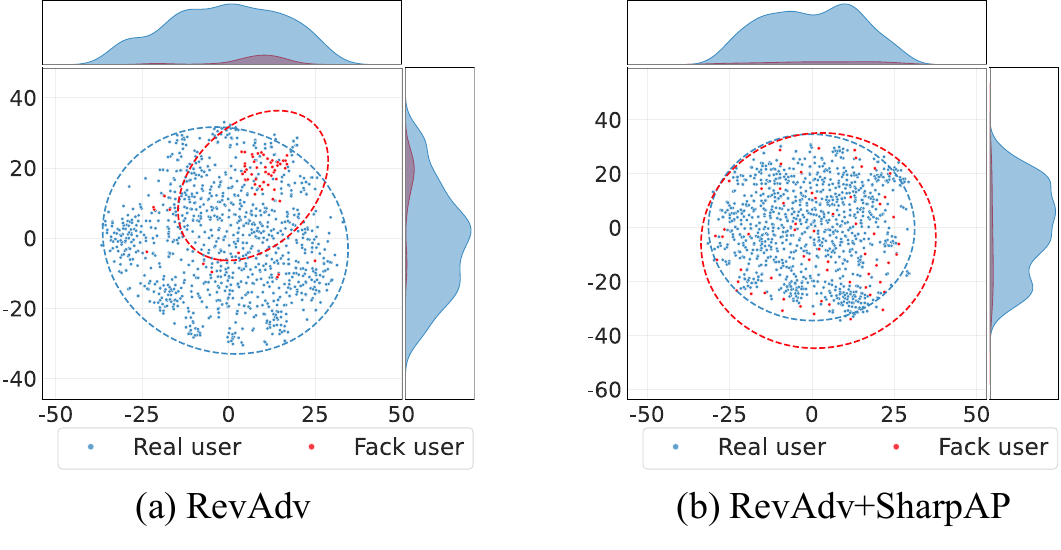}
  \end{center}
  \vspace{-0.40cm}
  \caption{Visualization of user representations with t-SNE. The top and right curve graphs display the marginal distributions for two reduced dimensions.}
  \label{fig:case}
\end{figure}

\begin{figure}[t]
\centering
\includegraphics[width=1.0\columnwidth]{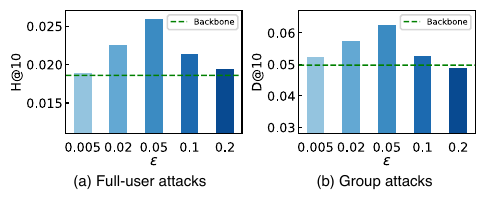}
\vspace{-20pt}
\caption{Performance comparisons under different perturbation radius $\epsilon$ using RevAdv as the backbone.}
\label{fig:perturbation}
\vspace{-0.3cm}
\end{figure}

\subsection{Performance on Defense Models}
To further validate the robustness of the proposed method, we evaluate full-user attacks and group attacks using existing defense models.
We select two representative models, BPR and LightGCN, as victim models. The results on SGL and SimGCL are similar, we do not present them due to space limitations.
We first evaluate the effectiveness of existing defense models against full-user attacks. 
We examine three representative defense models: PCA \cite{pca}, APT \cite{gray1}, and PamaCF~\cite{PamaCF}. PCA detects and removes fake users from the training data, while APT injects fake users to enhance the model’s robustness. 
PamaCF defends against poisoning attacks by dynamically assigning personalized adversarial perturbation magnitudes based on each user's embedding scale.
Table~\ref{tab:defense1} presents the results of three representative attack methods and our \name, evaluated on the MovieLens-1M dataset.
From Table \ref{tab:defense1}, we observe the following findings: 1) All defenses reduce the performance of all attackers, demonstrating their effectiveness in defending against attacks. 
2) However, even with the defense models in place, \name~consistently achieves higher hit ratios than the baseline attackers.
To understand \name's evasion capabilities, we conduct a visualization case study on the representative LightGCN victim model.
Specifically, we randomly select 1000 real users and all fake users (i.e., 60) in the MovieLens-1M dataset, respectively.
Fig.~\ref{fig:case} shows that while RevAdv's fake users form tight clusters, \name's are dispersed among real users. This demonstrates that \name's curvature-aware optimization effectively minimizes statistical footprints.
We also conduct experiments against group attacks, and the results are shown in Table \ref{tab:defense2}. The target items are randomly selected from the popular group. 
We can observe that all defense methods reduce the $D@10$ value, indicating effective resistance against attacks. 
The defense results prove the robustness of \name, i.e., the sharpness-aware attack can generate more robust poisoned data.

\begin{small}
\begin{figure*} [t]
  \begin{center}
  \includegraphics[scale=0.75]{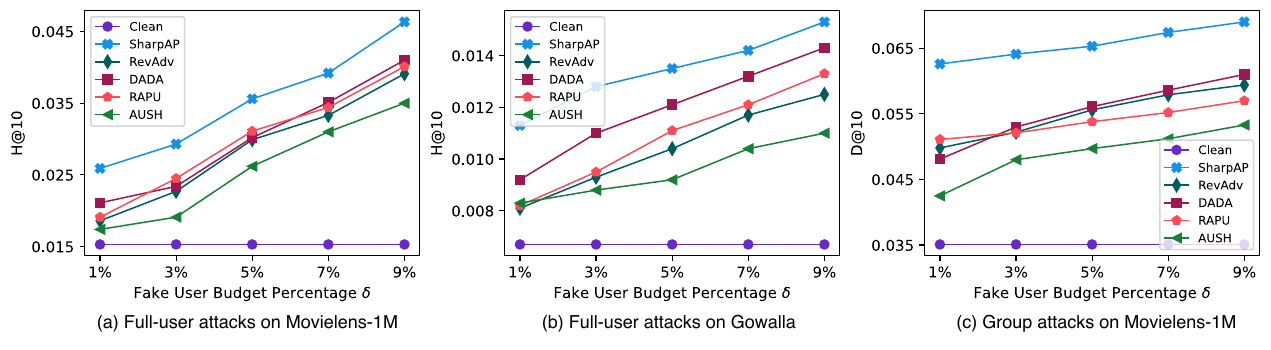}
  \end{center}
    \vspace{-0.40cm}
  \caption{
  The attack performance under different fake user budget percentages. $\delta$ is the percentage of fake users relative to the number of real users $U^r$.} 
  \label{fig:budget}
\end{figure*}
\end{small}
\begin{small}
\begin{figure*} [t]
  \begin{center}
  \includegraphics[scale=0.75]{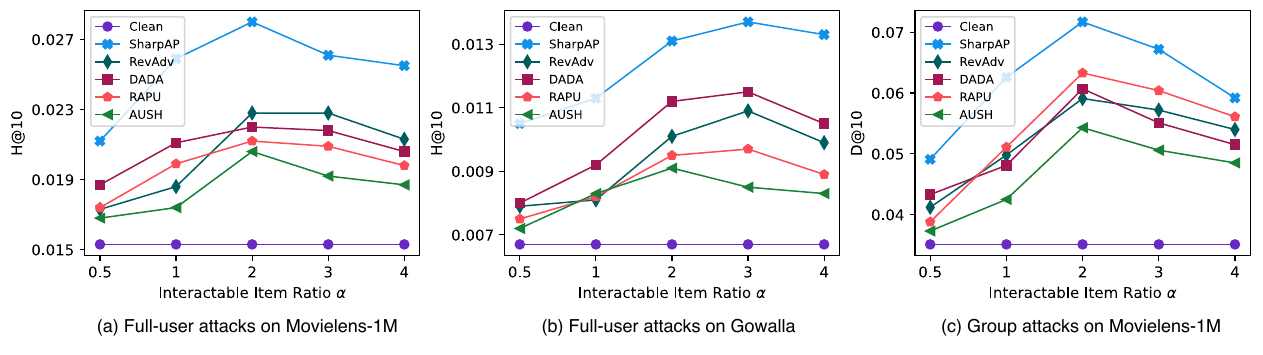}
  \end{center}
    \vspace{-0.40cm}
  \caption{
  The attack performance under different ratios of interactable items. $\alpha$ denotes the scaling ratio relative to the average number of items interacted with by real users.} 
  \label{fig:items}
  \vspace{-0.4cm}
\end{figure*}
\end{small}

\subsection{Parameter Sensitivity}

\subsubsection{Perturbation Radius $\epsilon$}
The perturbation radius $\epsilon$ in Eq.~\eqref{eq:bi_31} is a critical hyperparameter in our proposed method, defining the search space for the worst-case model. In all experiments, we set $\epsilon = 0.05$. To further analyze its effect on attack performance, we explore different values from the set \{0.005, 0.02, 0.05, 0.1, 0.2\}. We evaluate the effect of the perturbation radius $\epsilon$ on both full-user and group attacks, using RevAdv as the backbone and LightGCN as the victim model. The experiments are conducted on the MovieLens-1M dataset, and results are presented in Fig.~\ref{fig:perturbation}.
We have several observations from Fig.~\ref{fig:perturbation}.
First, a smaller $\epsilon$ causes the model to regress to the backbone attack due to the limited search space.
Second, with the increase of $\epsilon$, this implies the enhancement of perturbation space, \name~can explore a larger space to find the worst-case model, which achieves a better attack performance. 
However, a larger $\epsilon$ may cause the model to deviate from the training objective of the recommendation system, resulting in a poor attack performance.
Finally, \name~achieves the best attack results with a moderate value of perturbation radius $\epsilon$.

\subsubsection{Fake User Budget Percentage $\delta$}
In this section, we investigate the effect of fake user budget percentage $\delta$ in the attackers' capability constraint $|U^f|\le \delta|U^r|$. 
We systematically vary $\delta$ across \{1\%, 3\%, 5\%, 7\%, 9\%\} and conduct evaluations on multiple attack scenarios using LightGCN as the victim model. To ensure clarity, we restrict our comparison to representative and powerful baselines: RevAdv, DADA, RAPU, and AUSH.
As shown in Fig.~\ref{fig:budget}(a) and (b), attack effectiveness exhibits a positive correlation with fake user budget percentage $\delta$ in full-user attack scenarios across both MovieLens-1M and Gowalla datasets.
Notably, our method consistently outperforms all baselines, as evidenced by the superior positioning of its performance curve (blue). A critical observation reveals that performance gains scale more prominently on MovieLens-1M than on Gowalla with increasing $\delta$.
This phenomenon can be attributed to the inherent characteristics of the datasets, as MovieLens-1M has only 3,232 items whereas Gowalla contains 14,007 items, making it more difficult to promote target items to the Top-10 recommendation list.
Fig.~\ref{fig:budget}(c) presents group attack performance on MovieLens-1M. Consistent with previous findings, all attackers demonstrate enhanced performance with $\delta$ increasing.
Meanwhile, our proposed \name~achieves the best attack performance. These two attack scenarios further validate the robustness of our method against varying fake user budget percentages.

\subsubsection{Maximum number of interactable items $N$}
Following prior works~\cite{uba_www,dada_nips,incomplete_kdd21}, we constrain the number of items that each fake user can interact with to be no greater than the average number of items interacted with by real users., i.e., $||\mathbf{R}^f[u^f]||_0 \le N$ for each fake user $u^f$.
To further investigate the effect of behavioral patterns, we systematically modulate the interaction capacity using scaling ratios $\alpha \in \{0.5, 1, 2, 3, 4\}$, yielding adjusted interaction thresholds $\alpha N$.
Fig.~\ref{fig:items} reveals a non-monotonic relationship between the number of interactions and attack performance in both full-user and group attack scenarios. 
Specifically, performance initially improves as interaction capacity increases, reaching an optimal threshold, after which it begins to decline.
This suggests that an excessive number of interactable items may introduce noisy signals, ultimately weakening the effectiveness of the attack.
Furthermore, compared with these powerful baselines, our proposed method demonstrates robust superiority under different maximum numbers of items.

\begin{figure}[t]
\centering
\includegraphics[width=0.90\columnwidth]{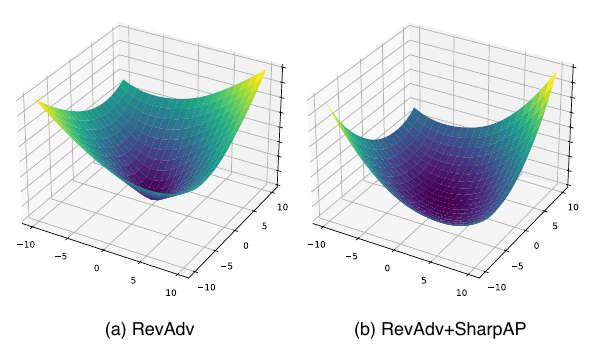}
\vspace{-10pt}
\caption{Visualization of loss landscape on Movielens-1M trained with or without SharpAP.}
\label{fig:loss}
\vspace{-0.2cm}
\end{figure}

\subsection{Visualization of Loss Landscape}
To validate that our method effectively reduces the sharpness of the attack loss landscape, we visualize the full-user attack loss landscape using the methodology from~\cite{li2018visualizing}. 
First, we fix the surrogate model parameters after training as $\theta^*$.  
Then, we generate two random perturbation vectors $w_1, w_2 \in \mathbb{R}^d$ (where $d = \dim(\theta^*)$) from the standard normal distribution.
Next, we sample 20 equidistant values for $m$ and $n$ within $[-10, 10]$, forming a 20×20 grid. For each grid point $(m_i, n_j)$, compute the perturbed parameters $\theta_{ij} = \theta^* + m_i w_1 + n_j w_2$, simulating the shift of model.
Finally, we calculate the attack loss $\mathcal{L}_{atk}(\theta_{ij}; \mathbf{R}^r)$ for each $\theta_{ij}$.  
As shown in Fig.~\ref{fig:loss}, our method produces a markedly smoother loss landscape compared to the backbone. This smoothness indicates that the attack remains effective even when victim model parameters deviate from $\theta^*$, corroborating our sharpness-aware design's ability to enhance the transferability.

\subsection{Running Time Comparison}
In this section, we report the running time of our proposed method on the MovieLens-1M dataset. All experiments were conducted on a server equipped with a single NVIDIA A40 GPU. As summarized in Table~\ref{tab:time}, the results indicate that \name~incurs a marginal computational overhead (approximately 5\%) compared to the baseline methods. We argue that this modest increase in time is acceptable, given that \name~significantly enhances attack transferability.

%% file: content/2-related.tex
\section{Related Work}
\label{related}
\subsection{Recommender Systems}
Modern recommender systems have shown remarkable performance in enhancing user experience by effectively aligning the diverse preferences of individual users with a wide range of available items~\cite{zhang2019deep,yang2023generative}. These systems predominantly utilize CF to generate recommendations. Given a user-item rating matrix, item-based CF approaches focus on calculating the similarity in item behavior, leveraging these similarity scores to make personalized recommendations~\cite{jin2004automatic,sarwar2001item}.
Subsequently, matrix factorization-based CF models have become increasingly prevalent due to their ability to better capture nuanced user preferences and deliver more accurate, personalized recommendations. A central challenge for these matrix factorization models is the learning of high-quality embeddings, which are crucial for optimizing recommendation performance.
For instance, WRMF~\cite{wrmf} is a well-established latent factor model that applies matrix factorization to derive user and item embeddings. Similarly, BPR~\cite{bpr2012} introduces a pairwise ranking loss function, which has been widely adopted in recommendation systems that rely on implicit feedback.
Since user-item interactions inherently form a bipartite graph, researchers have proposed neural graph-based models to capture higher-order collaboration signals in the learned embeddings. A notable example is LightGCN~\cite{lightgcn_he}, which updates the embeddings of users and items iteratively by aggregating neighborhood embeddings from previous layers to encode these higher-order relationships.
Additionally, to enhance the accuracy and robustness of GCNs for recommendation, SGL~\cite{sgl} augments the classical supervised recommendation task with an auxiliary self-supervised task. This task enhances node representation learning by maximizing the agreement between different views of the same node.
As novel recommendation models continue to emerge, they present significant challenges to existing poisoning methods, particularly those that generate poisoned data based on a fixed surrogate model.

\begin{table}[t]
\centering
\caption{Running time (s) on MovieLens-1M.}
\label{tab:time}
\begin{tabular}{l|ccccc}
\toprule
 & RevAdv & RAPU & DADA & CLeaR & DDSP \\
\midrule
Baseline & 465.22 & 459.48 & 523.74 & 438.62 & 372.34 \\
+SharpAP & 483.09 & 480.37 & 538.16 & 455.04 & 388.91 \\
\bottomrule
\end{tabular}
\vspace{-0.2cm}
\end{table}
\subsection{Injective Attacks}
From an attacker's perspective, poisoning attacks are designed to manipulate RS. While untargeted attacks aim to erode overall recommendation quality, targeted attacks seek to promote or demote specific items within distinct user groups (i.e., group attacks) or across all users (i.e., full-user attacks)~\cite{wang2024poisoning}. This paper focuses on targeted attacks, which are the most extensively studied category in RS~\cite{DDSP,CLeaR,dada_nips}.
Attackers can easily introduce bias into RS by injecting some fake users (i.e., injective attacks).
Existing methods on injective attacks against RS can be broadly categorized into three paradigms: heuristic-based, neural network-driven, and gradient-based attacks.
Heuristic-based attacks leverage the insight that similar users tend to share similar interests, and they rely on manually crafted fake profiles.
For example, in a random attack\cite{lam2004shilling}, fake users interact with target items as well as a set of randomly selected items.
Popular attack \cite{mobasher2007toward,incomplete_kdd21} not only gives high ratings to the target item but also to several popular items, thereby enhancing the attack’s effectiveness.
However, heuristic attacks are unable to account for all recommendation patterns, leading to limited performance.
Neural network-driven attacks propose neural networks to learn probability distributions of selected items for each fake user.
Specifically, these methods aim to assign high scores to target items for fake users while ensuring that the generated fake profiles closely resemble real users as much as possible.
For example, AUSH~\cite{aush} employs a tailored GAN network to generate fake user profiles.
However, LegUP~\cite{legup} critiques AUSH for employing an indirect generation loss that is only loosely connected to the reconstruction of selected user data. This design choice may limit the overall effectiveness of the attack. To address this limitation, LegUP enhances AUSH by integrating a surrogate model, thereby further improving the poisoning performance.
Gradient-based attacks relax the discrete fake user behaviors into continuous values and directly optimize by maximizing the attack objective.
For example, RevAdv~\cite{revAdv} uses a more accurate gradient calculation method.
DADA~\cite{dada_nips} proposes a difficulty and diversity-aware objective function, which maximizes the attack performance.
CLeaR~\cite{CLeaR} employs a dual-objective strategy that promotes a smoother spectral value distribution to broaden user reachability while simultaneously optimizing a rank promotion objective.
DDSP~\cite{DDSP} employs a dual-promotion objective to simultaneously promote both target items and user-preferred items, thereby ensuring attack stealthiness.
Although these methods achieve good performance, they fail to explicitly model transferability.
Table~\ref{tab:related} summarizes the existing attacks.

\begin{table}[t]
\centering
\caption{Summary of existing attackers.}
\label{tab:related}
\scalebox{0.9}{\begin{tabular}{lcc}
\toprule
\textbf{Attack Paradigm} & \textbf{Attacker} & \textbf{Mechanism for Transferability} \\
\midrule
\multirow{2}{*}{Neural network--based} 
& AUSH~\cite{aush}   & GAN \\
& LegUP~\cite{legup}  & GAN \\
\midrule
\multirow{5}{*}{Gradient--based} 
& RevAdv~\cite{revAdv}         & Improved gradient calculation \\ 
& DADA~\cite{dada_nips}           & Difficulty and diversity aware \\
& CLeaR~\cite{CLeaR}          & Dispersion and rank promotion \\
& DDSP~\cite{DDSP}           & Diversity aware dual--promotion \\
& SharpAP (ours) & Sharpness--aware minimization \\
\bottomrule
\end{tabular}}
\vspace{-0.4cm}
\end{table}
\subsection{Sharpness-Aware Minimization}

Sharpness-aware minimization~\cite{wen2023sharpness,keskar2017large} is a highly effective regularization technique that enhances the model's generalization across various settings. 
A lower sharpness value is generally associated with better generalization performance.
Specifically, sharpness-aware minimization achieves improved generalization by minimizing the maximum loss within a neighborhood of the current parameter, rather than optimizing the loss at a single point. This strategy yields solutions that are more robust to small parameter perturbations, thereby enhancing both generalization and robustness~\cite{li2024friendly}.
Building on this idea, recent studies \cite{sharpness_mini} and \cite{zheng2021regularizing} independently propose minimizing the loss in the direction of the worst-case perturbation from the current parameter to improve generalization. Similarly, the work \cite{wu2020adversarial} introduces a nearly identical method aimed at improving the robust generalization of adversarial training. In addition, ASAM~\cite{kwon2021asam} dynamically adjusts the perturbation region according to the scale of the model weights.
ImbSAM~\cite{zhou2023imbsam} extends the applicability of sharpness-aware minimization to scenarios with highly imbalanced data distributions, effectively addressing the trade-off between sharpness minimization and data imbalance.
Recently, in computer vision, the work~\cite{he2024sharpness} experimentally demonstrated that using the sharpness-aware principle can improve attack transferability when the perturbation is continuous. However, directly applying sharpness awareness in recommender systems faces challenges such as optimizing over discrete data and the lack of theoretical justification.

%% file: content/6-conclusion.tex
\section{CONCLUSION}

In this paper, we propose \name, a novel method to enhance the cross-model transferability of injective attacks on recommender systems. 
Specifically, we argue that existing methods for generating poisoned data based on a fixed surrogate model fundamentally rely on a precarious assumption.
This reliance causes the poisoned data to overfit the surrogate model, resulting in poor transferability when encountering model structure shifts. 
To address this issue, we introduce the sharpness-aware minimization principle to seek the approximately worst-case model during the attack process. 
By iteratively optimizing poisoned data against the worst-case model instead of a fixed surrogate model, we enhance transferability across various victim models.
Our method can be formulated as a sharpness-aware min-max-min tri-level optimization problem, where the maximization performs a bounded perturbation to seek the worst-case model. 
Extensive experiments on three real-world datasets across representative attacks demonstrated the effectiveness of the proposed \name.
In the future, we will explore defense methods against sharpness-aware poisoning.

%% file: author.tex
\begin{IEEEbiography}[{\includegraphics[width=1in,height=1.25in,clip,keepaspectratio]{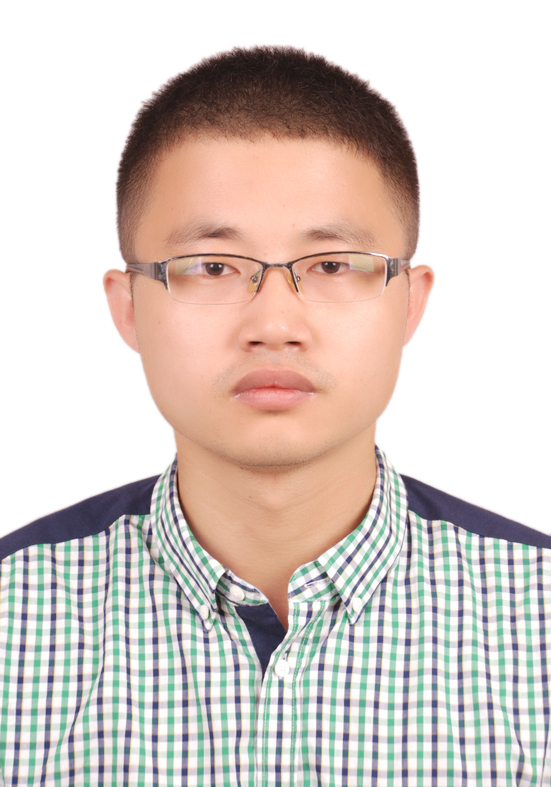}}]
{Junsong Xie} is currently pursuing a Ph.D. degree at Hefei University of Technology (HFUT), China. He received the master's degree from the University of Science and Technology of China (USTC). He has published several papers in referred conferences and journals, such as IJCAI and Frontiers of Computer Science. His major research interest lies on data mining and recommender systems.
\end{IEEEbiography}

\begin{IEEEbiography}[{\includegraphics[width=1in,height=1.25in,clip,keepaspectratio]{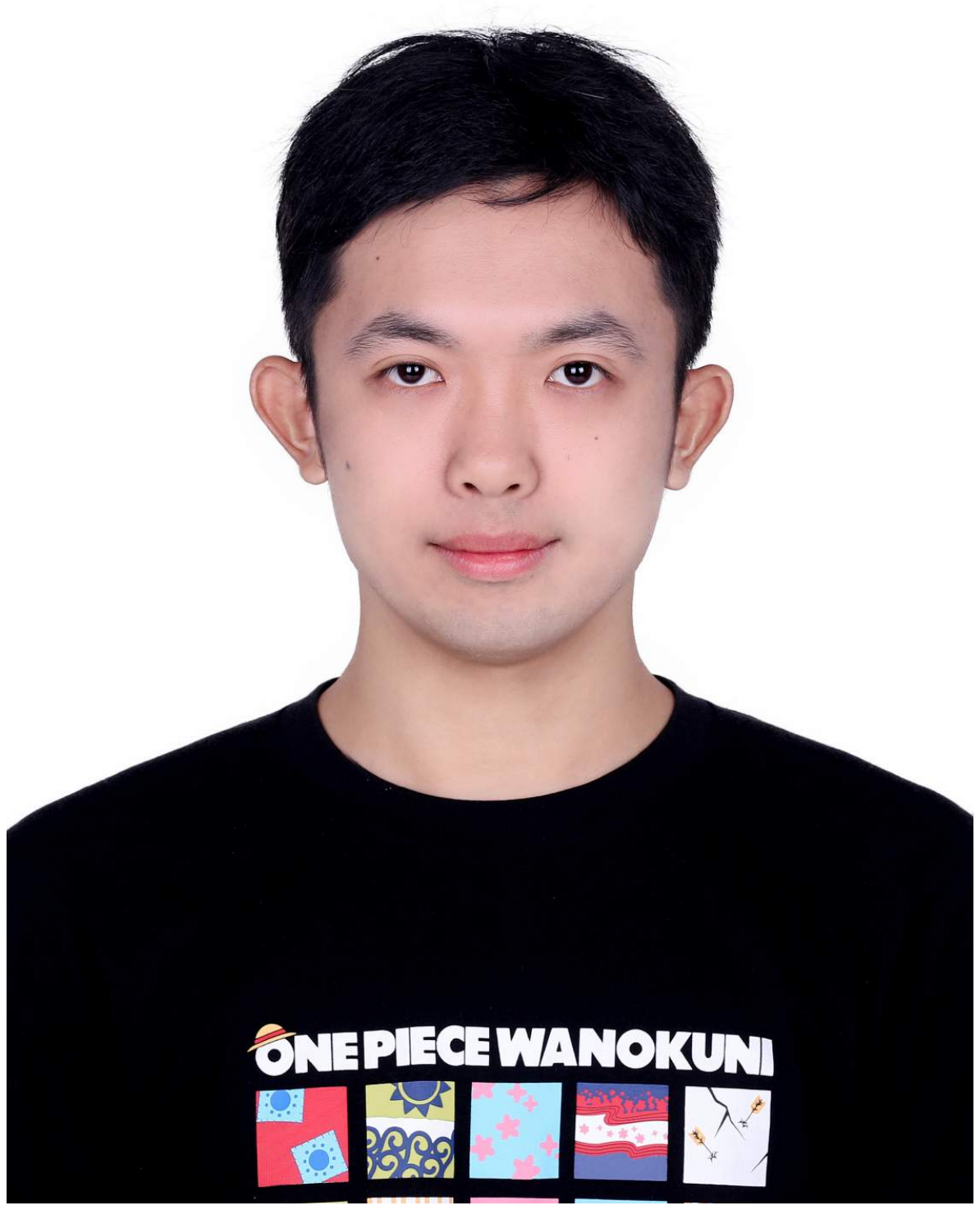}}]
{Yonghui Yang} is currently a Research Fellow at the National University of Singapore. He received the Ph.D degree from the 
Hefei University of Technology, China. He has published over 10 papers in referred journals and conferences, such as IEEE TKDE, TBD, KDD, SIGIR, IJCAI, and ACM Multimedia. His research interests include data-centric recommendation and LLM safety.
\end{IEEEbiography}

\begin{IEEEbiography}[{\includegraphics[width=1in,height=1.25in,clip,keepaspectratio]{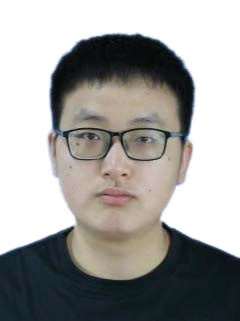}}]
{Pengyang Shao} is currently pursuing a PhD degree at Hefei University of Technology (HFUT), China. He received his Bachelor’s degree in 2019 from the same university. His research interest lies on data mining, and large language models. He has published several papers in leading conferences and journals, including KDD, WWW, ACM TOIS and SCIS.
\end{IEEEbiography}

\begin{IEEEbiography}[{\includegraphics[width=1in,height=1.25in,clip,keepaspectratio]{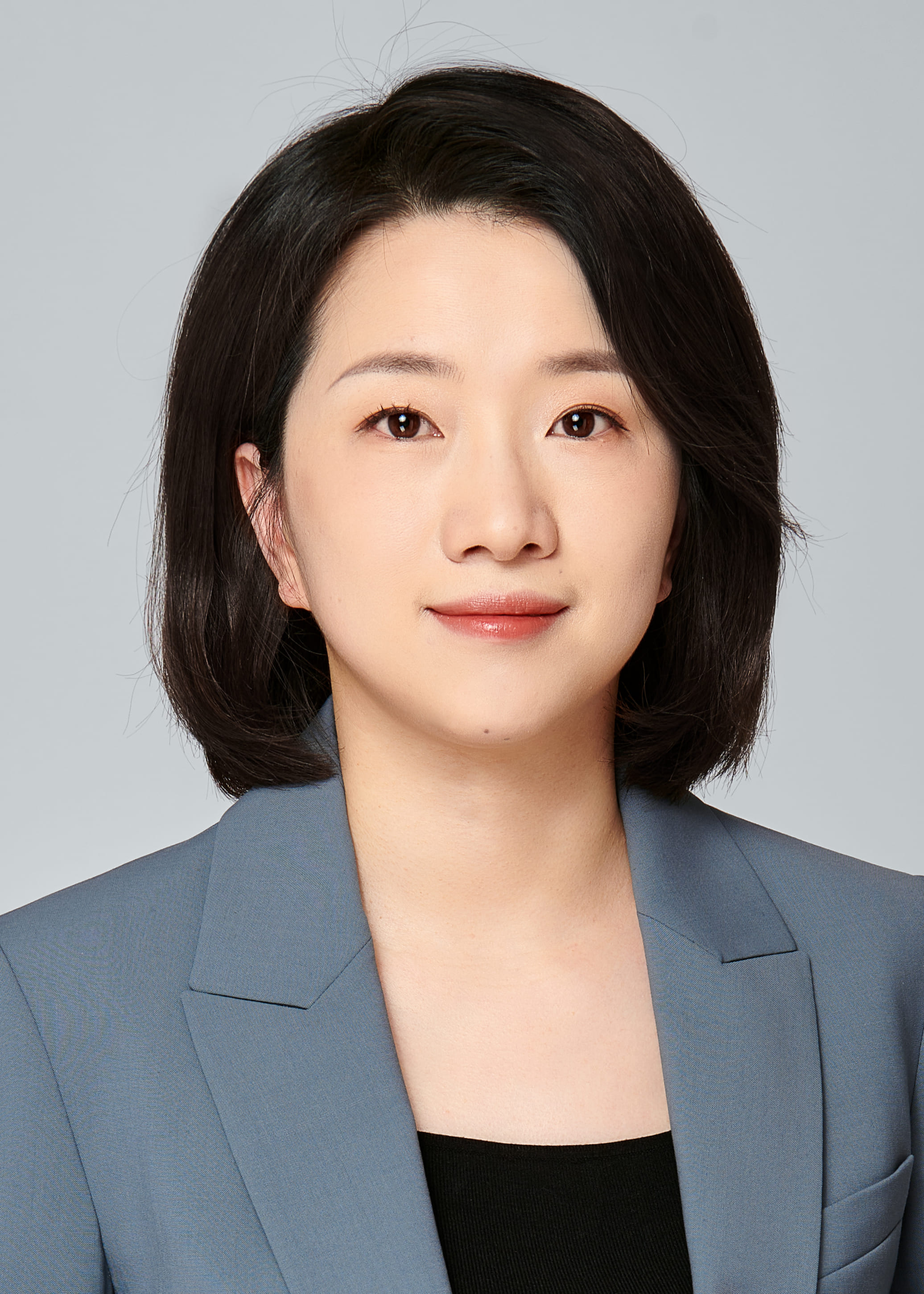}}]
{Le Wu}  is currently a professor at the Hefei University of Technology (HFUT), China. She received her Ph.D. degree from the University of Science and Technology of China (USTC). Her general area of research interests are data mining and knowledge engineering, personalized recommendation,  trustworthy user modeling and applications. She has published more than 70 papers in leading journals and conferences, such as TKDE, TOIS, WWW, SIGIR, KDD, NeurIPS and so on. She is an associate editor of IEEE Trans. on Big Data, AI Open and Frontieres of Computer Science.
\end{IEEEbiography}

%% file: ref.bib
@article{wule2022survey,
  title={A survey on accuracy-oriented neural recommendation: From collaborative filtering to information-rich recommendation},
  author={Wu, Le and He, Xiangnan and Wang, Xiang and Zhang, Kun and Wang, Meng},
  journal={IEEE TKDE},
  volume={35},
  number={5},
  pages={4425--4445},
  year={2022},
  publisher={IEEE}
}

@article{wu2018collaborative,
  title={Collaborative neural social recommendation},
  author={Wu, Le and Sun, Peijie and Hong, Richang and Ge, Yong and Wang, Meng},
  journal={IEEE transactions on systems, man, and cybernetics: systems},
  volume={51},
  number={1},
  pages={464--476},
  year={2018},
  publisher={IEEE}
}

@article{zhang2019deep,
  title={Deep learning based recommender system: A survey and new perspectives},
  author={Zhang, Shuai and Yao, Lina and Sun, Aixin and Tay, Yi},
  journal={ACM computing surveys (CSUR)},
  volume={52},
  number={1},
  pages={1--38},
  year={2019},
  publisher={ACM New York, NY, USA}
}

@inproceedings{yang2023generative,
  title={Generative-contrastive graph learning for recommendation},
  author={Yang, Yonghui and Wu, Zhengwei and Wu, Le and Zhang, Kun and Hong, Richang and Zhang, Zhiqiang and Zhou, Jun and Wang, Meng},
  booktitle={SIGIR},
  pages={1117--1126},
  year={2023}
}

@article{wang2024poisoning,
  title={Poisoning Attacks and Defenses in Recommender Systems: A Survey},
  author={Wang, Zongwei and Yu, Junliang and Gao, Min and Yuan, Wei and Ye, Guanhua and Sadiq, Shazia and Yin, Hongzhi},
  journal={arXiv preprint arXiv:2406.01022},
  year={2024}
}

@inproceedings{revAdv,
  title={Revisiting adversarially learned injection attacks against recommender systems},
  author={Tang, Jiaxi and Wen, Hongyi and Wang, Ke},
  booktitle={RecSys},
  pages={318--327},
  year={2020}
}

@inproceedings{uba_www,
  title={Uplift Modeling for Target User Attacks on Recommender Systems},
  author={Wang, Wenjie and Wang, Changsheng and Feng, Fuli and Shi, Wentao and Ding, Daizong and Chua, Tat-Seng},
  booktitle={WWW},
  pages={3343--3354},
  year={2024}
}

@article{dada_nips,
  title={Revisiting injective attacks on recommender systems},
  author={Li, Haoyang and Di, Shimin and Chen, Lei},
  journal={NeurIPS},
  volume={35},
  pages={29989--30002},
  year={2022}
}

@inproceedings{zhang2024improving,
  title={Improving the shortest plank: Vulnerability-aware adversarial training for robust recommender system},
  author={Zhang, Kaike and Cao, Qi and Wu, Yunfan and Sun, Fei and Shen, Huawei and Cheng, Xueqi},
  booktitle={RecSys},
  pages={680--689},
  year={2024}
}

@article{legup,
  title={Shilling black-box recommender systems by learning to generate fake user profiles},
  author={Lin, Chen and Chen, Si and Zeng, Meifang and Zhang, Sheng and Gao, Min and Li, Hui},
  journal={IEEE TNNLS},
  volume={35},
  number={1},
  pages={1305--1319},
  year={2022},
  publisher={IEEE}
}

@inproceedings{aush,
  title={Attacking recommender systems with augmented user profiles},
  author={Lin, Chen and Chen, Si and Li, Hui and Xiao, Yanghua and Li, Lianyun and Yang, Qian},
  booktitle={CIKM},
  pages={855--864},
  year={2020}
}

@inproceedings{incomplete_kdd21,
  title={Data poisoning attack against recommender system using incomplete and perturbed data},
  author={Zhang, Hengtong and Tian, Changxin and Li, Yaliang and Su, Lu and Yang, Nan and Zhao, Wayne Xin and Gao, Jing},
  booktitle={KDD},
  pages={2154--2164},
  year={2021}
}

@inproceedings{fang2020influence,
  title={Influence function based data poisoning attacks to top-n recommender systems},
  author={Fang, Minghong and Gong, Neil Zhenqiang and Liu, Jia},
  booktitle={WWW},
  pages={3019--3025},
  year={2020}
}

@inproceedings{guo2023targeted,
  title={Targeted shilling attacks on gnn-based recommender systems},
  author={Guo, Sihan and Bai, Ting and Deng, Weihong},
  booktitle={CIKM},
  pages={649--658},
  year={2023}
}

@inproceedings{burke2005limited,
  title={Limited knowledge shilling attacks in collaborative filtering systems},
  author={Burke, Robin and Mobasher, Bamshad and Bhaumik, Runa},
  booktitle={IJCAI},
  pages={17--24},
  year={2005}
}

@inproceedings{yang2017fake,
  title={Fake Co-visitation Injection Attacks to Recommender Systems.},
  author={Yang, Guolei and Gong, Neil Zhenqiang and Cai, Ying},
  booktitle={NDSS},
  year={2017}
}

@inproceedings{fan2021attacking,
  title={Attacking black-box recommendations via copying cross-domain user profiles},
  author={Fan, Wenqi and Derr, Tyler and Zhao, Xiangyu and Ma, Yao and Liu, Hui and Wang, Jianping and Tang, Jiliang and Li, Qing},
  booktitle={ICDE},
  pages={1583--1594},
  year={2021},
  organization={IEEE}
}

@inproceedings{wu2021triple,
  title={Triple adversarial learning for influence based poisoning attack in recommender systems},
  author={Wu, Chenwang and Lian, Defu and Ge, Yong and Zhu, Zhihao and Chen, Enhong},
  booktitle={KDD},
  pages={1830--1840},
  year={2021}
}

@article{cheng2024towards,
  title={Towards Robust Recommendation: A Review and an Adversarial Robustness Evaluation Library},
  author={Cheng, Lei and Huang, Xiaowen and Sang, Jitao and Yu, Jian},
  journal={arXiv preprint arXiv:2404.17844},
  year={2024}
}

@article{wrmf,
  title={Matrix factorization techniques for recommender systems},
  author={Koren, Yehuda and Bell, Robert and Volinsky, Chris},
  journal={Computer},
  volume={42},
  number={8},
  pages={30--37},
  year={2009},
  publisher={IEEE}
}

@article{movielens1,
  title={The movielens datasets: History and context},
  author={Harper, F Maxwell and Konstan, Joseph A},
  journal={TIIS},
  volume={5},
  number={4},
  pages={1--19},
  year={2015},
  publisher={Acm New York, NY, USA}
}

@inproceedings{bpr2012,
  title={BPR: Bayesian personalized ranking from implicit feedback},
  author={Rendle, Steffen and Freudenthaler, Christoph and Gantner, Zeno and Schmidt-Thieme, Lars},
  booktitle={UAI},
  pages={452--461},
  year={2009}
}

@inproceedings{lightgcn_he,
  title={Lightgcn: Simplifying and powering graph convolution network for recommendation},
  author={He, Xiangnan and Deng, Kuan and Wang, Xiang and Li, Yan and Zhang, Yongdong and Wang, Meng},
  booktitle={SIGIR},
  pages={639--648},
  year={2020}
}

@inproceedings{sharpness_mini,
  title={Sharpness-aware Minimization for Efficiently Improving Generalization},
  author={Foret, Pierre and Kleiner, Ariel and Mobahi, Hossein and Neyshabur, Behnam},
  booktitle={ICLR},
  year={2021}
}

@article{li2016data,
  title={Data poisoning attacks on factorization-based collaborative filtering},
  author={Li, Bo and Wang, Yining and Singh, Aarti and Vorobeychik, Yevgeniy},
  journal={NeurIPS},
  volume={29},
  year={2016}
}

@inproceedings{gray1,
  title={Fight fire with fire: towards robust recommender systems via adversarial poisoning training},
  author={Wu, Chenwang and Lian, Defu and Ge, Yong and Zhu, Zhihao and Chen, Enhong and Yuan, Senchao},
  booktitle={SIGIR},
  pages={1074--1083},
  year={2021}
}

@inproceedings{cho2011friendship,
  title={Friendship and mobility: user movement in location-based social networks},
  author={Cho, Eunjoon and Myers, Seth A and Leskovec, Jure},
  booktitle={KDD},
  pages={1082--1090},
  year={2011}
}

@inproceedings{wang2020setrank,
  title={Setrank: A setwise bayesian approach for collaborative ranking from implicit feedback},
  author={Wang, Chao and Zhu, Hengshu and Zhu, Chen and Qin, Chuan and Xiong, Hui},
  booktitle={AAAI},
  volume={34},
  number={04},
  pages={6127--6136},
  year={2020}
}

@inproceedings{zhou2018deep,
  title={Deep interest network for click-through rate prediction},
  author={Zhou, Guorui and Zhu, Xiaoqiang and Song, Chenru and Fan, Ying and Zhu, Han and Ma, Xiao and Yan, Yanghui and Jin, Junqi and Li, Han and Gai, Kun},
  booktitle={KDD},
  pages={1059--1068},
  year={2018}
}

@inproceedings{tang2018personalized,
  title={Personalized top-n sequential recommendation via convolutional sequence embedding},
  author={Tang, Jiaxi and Wang, Ke},
  booktitle={WSDM},
  pages={565--573},
  year={2018}
}

@inproceedings{lam2004shilling,
  title={Shilling recommender systems for fun and profit},
  author={Lam, Shyong K and Riedl, John},
  booktitle={Proceedings of the 13th international conference on World Wide Web},
  pages={393--402},
  year={2004}
}

@article{mobasher2007toward,
  title={Toward trustworthy recommender systems: An analysis of attack models and algorithm robustness},
  author={Mobasher, Bamshad and Burke, Robin and Bhaumik, Runa and Williams, Chad},
  journal={ACM Transactions on Internet Technology (TOIT)},
  volume={7},
  number={4},
  pages={23--es},
  year={2007},
  publisher={ACM New York, NY, USA}
}

@article{pca,
  title={Unsupervised strategies for shilling detection and robust collaborative filtering},
  author={Mehta, Bhaskar and Nejdl, Wolfgang},
  journal={User Modeling and User-Adapted Interaction},
  volume={19},
  pages={65--97},
  year={2009},
  publisher={Springer}
}

@inproceedings{wen2023sharpness,
  title={How Sharpness-Aware Minimization Minimizes Sharpness?},
  author={Wen, Kaiyue and Ma, Tengyu and Li, Zhiyuan},
  booktitle={ICLR},
  year={2023}
}

@inproceedings{zheng2021regularizing,
  title={Regularizing neural networks via adversarial model perturbation},
  author={Zheng, Yaowei and Zhang, Richong and Mao, Yongyi},
  booktitle={CVPR},
  pages={8156--8165},
  year={2021}
}

@inproceedings{wu2020adversarial,
  title={Adversarial weight perturbation helps robust generalization},
  author={Wu, Dongxian and Xia, Shu-Tao and Wang, Yisen},
  booktitle={NeurIPS},
  pages={2958--2969},
  year={2020}
}

@inproceedings{kwon2021asam,
  title={Asam: Adaptive sharpness-aware minimization for scale-invariant learning of deep neural networks},
  author={Kwon, Jungmin and Kim, Jeongseop and Park, Hyunseo and Choi, In Kwon},
  booktitle={ICML},
  pages={5905--5914},
  year={2021},
  organization={PMLR}
}

@inproceedings{jin2004automatic,
  title={An automatic weighting scheme for collaborative filtering},
  author={Jin, Rong and Chai, Joyce Y and Si, Luo},
  booktitle={SIGIR},
  pages={337--344},
  year={2004}
}

@inproceedings{sarwar2001item,
  title={Item-based collaborative filtering recommendation algorithms},
  author={Sarwar, Badrul and Karypis, George and Konstan, Joseph and Riedl, John},
  booktitle={Proceedings of the 10th international conference on World Wide Web},
  pages={285--295},
  year={2001}
}

@inproceedings{chen2023does,
  title={Why does sharpness-aware minimization generalize better than SGD?},
  author={Chen, Zixiang and Zhang, Junkai and Kou, Yiwen and Chen, Xiangning and Hsieh, Cho-Jui and Gu, Quanquan},
  booktitle={NeurIPS},
  pages={72325--72376},
  year={2023}
}

@article{li2018visualizing,
  title={Visualizing the loss landscape of neural nets},
  author={Li, Hao and Xu, Zheng and Taylor, Gavin and Studer, Christoph and Goldstein, Tom},
  journal={Advances in neural information processing systems},
  volume={31},
  year={2018}
}

@inproceedings{sgl,
  title={Self-supervised graph learning for recommendation},
  author={Wu, Jiancan and Wang, Xiang and Feng, Fuli and He, Xiangnan and Chen, Liang and Lian, Jianxun and Xie, Xing},
  booktitle={SIGIR},
  pages={726--735},
  year={2021}
}

@article{wzw02,
  title={Gray-box shilling attack: An adversarial learning approach},
  author={Wang, Zongwei and Gao, Min and Li, Jundong and Zhang, Junwei and Zhong, Jiang},
  journal={TIST},
  volume={13},
  number={5},
  pages={1--21},
  year={2022},
  publisher={ACM New York, NY}
}

@inproceedings{li2024friendly,
  title={Friendly sharpness-aware minimization},
  author={Li, Tao and Zhou, Pan and He, Zhengbao and Cheng, Xinwen and Huang, Xiaolin},
  booktitle={CVPR},
  pages={5631--5640},
  year={2024}
}

@inproceedings{zhou2023imbsam,
  title={Imbsam: A closer look at sharpness-aware minimization in class-imbalanced recognition},
  author={Zhou, Yixuan and Qu, Yi and Xu, Xing and Shen, Hengtao},
  booktitle={CVPR},
  pages={11345--11355},
  year={2023}
}

@inproceedings{keskar2017large,
  title={On Large-Batch Training for Deep Learning: Generalization Gap and Sharp Minima},
  author={Keskar, Nitish Shirish and Mudigere, Dheevatsa and Nocedal, Jorge and Smelyanskiy, Mikhail and Tang, Ping Tak Peter},
  booktitle={ICLR},
  year={2017}
}

@inproceedings{amazon,
  title={Image-based recommendations on styles and substitutes},
  author={McAuley, Julian and Targett, Christopher and Shi, Qinfeng and Van Den Hengel, Anton},
  booktitle={SIGIR},
  pages={43--52},
  year={2015}
}

@inproceedings{simgcl,
  title={Are graph augmentations necessary? simple graph contrastive learning for recommendation},
  author={Yu, Junliang and Yin, Hongzhi and Xia, Xin and Chen, Tong and Cui, Lizhen and Nguyen, Quoc Viet Hung},
  booktitle={SIGIR},
  pages={1294--1303},
  year={2022}
}

@inproceedings{CLeaR,
  title={Unveiling vulnerabilities of contrastive recommender systems to poisoning attacks},
  author={Wang, Zongwei and Yu, Junliang and Gao, Min and Yin, Hongzhi and Cui, Bin and Sadiq, Shazia},
  booktitle={KDD},
  pages={3311--3322},
  year={2024}
}

@inproceedings{DDSP,
  title={Diversity-aware Dual-promotion Poisoning Attack on Sequential Recommendation},
  author={Zhao, Yuchuan and Chen, Tong and Yu, Junliang and Zheng, Kai and Cui, Lizhen and Yin, Hongzhi},
  booktitle={SIGIR},
  pages={1634--1644},
  year={2025}
}

@inproceedings{PamaCF,
  title={Understanding and improving adversarial collaborative filtering for robust recommendation},
  author={Zhang, Kaike and Cao, Qi and Wu, Yunfan and Sun, Fei and Shen, Huawei and Cheng, Xueqi},
  booktitle={NeurIPS},
  volume={37},
  pages={120381--120417},
  year={2024}
}

@inproceedings{he2024sharpness,
  title={Sharpness-Aware Data Poisoning Attack},
  author={He, Pengfei and Xu, Han and Ren, Jie and Cui, Yingqian and Zeng, Shenglai and Liu, Hui and Aggarwal, Charu and Tang, Jiliang},
  booktitle={ICLR},
  year={2024}
}

@article{wang2025graph,
  title={When Graph Contrastive Learning Backfires: Spectral Vulnerability and Defense in Recommendation},
  author={Wang, Zongwei and Gao, Min and Yu, Junliang and Sadiq, Shazia and Yin, Hongzhi and Liu, Ling},
  journal={ACM Transactions on Information Systems},
  year={2025},
  publisher={ACM New York, NY}
}

@inproceedings{wang2025id,
  title={Id-free not risk-free: Llm-powered agents unveil risks in id-free recommender systems},
  author={Wang, Zongwei and Gao, Min and Yu, Junliang and Gao, Xinyi and Nguyen, Quoc Viet Hung and Sadiq, Shazia and Yin, Hongzhi},
  booktitle={Proceedings of the 48th International ACM SIGIR Conference on Research and Development in Information Retrieval},
  pages={1902--1911},
  year={2025}
}
